\newcommand*{\circled}[1]{\lower.7ex\hbox{\tikz\draw (0pt, 0pt)%
    circle (.3em) node {\makebox[1em][c]{\small #1}};}}
\newtheorem{thm}{Theorem}[section]
\newtheorem{defn}{Definition}[section]
\theoremstyle{definition}
\begin{document}
%
% paper title
% Titles are generally capitalized except for words such as a, an, and, as,
% at, but, by, for, in, nor, of, on, or, the, to and up, which are usually
% not capitalized unless they are the first or last word of the title.
% Linebreaks \\ can be used within to get better formatting as desired.
% Do not put math or special symbols in the title.
\title{Roulette: A Semantic Privacy-Preserving
Device-Edge Collaborative Inference Framework for
Deep Learning Classification Tasks}
%
%
% author names and IEEE memberships
% note positions of commas and nonbreaking spaces ( ~ ) LaTeX will not break
% a structure at a ~ so this keeps an author's name from being broken across
% two lines.
% use \thanks{} to gain access to the first footnote area
% a separate \thanks must be used for each paragraph as LaTeX2e's \thanks
% was not built to handle multiple paragraphs
%
%
%\IEEEcompsocitemizethanks is a special \thanks that produces the bulleted
% lists the Computer Society journals use for "first footnote" author
% affiliations. Use \IEEEcompsocthanksitem which works much like \item
% for each affiliation group. When not in compsoc mode,
% \IEEEcompsocitemizethanks becomes like \thanks and
% \IEEEcompsocthanksitem becomes a line break with idention. This
% facilitates dual compilation, although admittedly the differences in the
% desired content of \author between the different types of papers makes a
% one-size-fits-all approach a daunting prospect. For instance, compsoc 
% journal papers have the author affiliations above the "Manuscript
% received ..."  text while in non-compsoc journals this is reversed. Sigh.

\author{Jingyi Li,
        Guocheng Liao,
        Lin Chen,
        and Xu Chen% <-this % stops a space
% \author{Michael~Shell,~\IEEEmembership{Member,~IEEE,}
%         John~Doe,~\IEEEmembership{Fellow,~OSA,}
%         and~Jane~Doe,~\IEEEmembership{Life~Fellow,~IEEE}% <-this % stops a space
\IEEEcompsocitemizethanks{\IEEEcompsocthanksitem Jingyi Li, Lin Chen and Xu Chen are with School of Computer Science and Engineering, Sun Yat-sen University, Guangzhou, China.\protect\\
% note need leading \protect in front of \\ to get a newline within \thanks as
% \\ is fragile and will error, could use \hfil\break instead.
% E-mail: see http://www.michaelshell.org/contact.html
\IEEEcompsocthanksitem Guocheng Liao is with School of Software Engineering, Sun Yat-sen University, Zhuhai, China. }}% <-this % stops a space
\IEEEtitleabstractindextext{%
\begin{abstract}

Deep learning classifiers are crucial in the age of artificial intelligence. The device-edge-based collaborative inference has been widely adopted as an efficient framework for promoting its applications in IoT and 5G/6G networks. However, it suffers from accuracy degradation under non-i.i.d. data distribution and privacy disclosure. For accuracy degradation, direct use of transfer learning and split learning is high cost and privacy issues remain. For privacy disclosure, cryptography-based approaches lead to a huge overhead. Other lightweight methods assume that the ground truth is non-sensitive and can be exposed. But for many applications, the ground truth is the user's crucial privacy-sensitive information. In this paper, we propose a framework of Roulette, which is a task-oriented semantic privacy-preserving collaborative inference framework for deep learning classifiers. More than input data, we treat the ground truth of the data as private information. We develop a novel paradigm of split learning where the back-end DNN is frozen and the front-end DNN is retrained to be both a feature extractor and an encryptor. Moreover, we provide a  differential privacy guarantee and analyze the hardness of ground truth inference attacks. To validate the proposed Roulette, we conduct extensive performance evaluations using realistic datasets, which demonstrate that Roulette can effectively defend against various attacks and meanwhile achieve good model accuracy. In a situation where the non-i.i.d. is very severe, Roulette improves the inference accuracy by 21\% averaged over benchmarks, while making the accuracy of 
discrimination attacks almost equivalent to random guessing.

% Moreover, to leverage the toolset of differential privacy, we use Laplace mechanism and random response to add noise to the intermediate output and labels, which further strengthens the privacy protection provided. To validate the proposed Roulette, we conduct extensive performance evaluations using realistic datasets, which demonstrate that Roulette can effectively defense against various attacks and meanwhile achieve a good model accuracy.
\end{abstract}

% Note that keywords are not normally used for peerreview papers.
\begin{IEEEkeywords}
privacy preservation, deep learning classifier, collaborative inference, edge computing 
\end{IEEEkeywords}}

% make the title area
\maketitle

% To allow for easy dual compilation without having to reenter the
% abstract/keywords data, the \IEEEtitleabstractindextext text will
% not be used in maketitle, but will appear (i.e., to be "transported")
% here as \IEEEdisplaynontitleabstractindextext when compsoc mode
% is not selected <OR> if conference mode is selected - because compsoc
% conference papers position the abstract like regular (non-compsoc)
% papers do!
\IEEEdisplaynontitleabstractindextext
% \IEEEdisplaynontitleabstractindextext has no effect when using
% compsoc under a non-conference mode.

% For peer review papers, you can put extra information on the cover
% page as needed:
% \ifCLASSOPTIONpeerreview
% \begin{center} \bfseries EDICS Category: 3-BBND \end{center}
% \fi
%
% For peerreview papers, this IEEEtran command inserts a page break and
% creates the second title. It will be ignored for other modes.
\IEEEpeerreviewmaketitle

% \ifCLASSOPTIONcompsoc
\section{Introduction}
\subsection{Background and Motivation}
\IEEEPARstart{O}{ver} the past decade, deep learning (DL) and deep neural networks (DNN) have been in full swing. As one of its most important branches, deep learning classifier has performed excellently in the fields of computer vision, natural language processing, robotics, etc \cite{AI,AI2}. In recent years, enabled by advanced communication technologies such as 5G/6G, artificial intelligence of things (AIoT) applications are emerging, and many devices (e.g., cameras, robots, and vehicles) have a strong demand for real-time DL classification services \cite{AIoT}. 

% For example, in many task-driven applications such as smart cities and smart agriculture, robots and UAVs need DL classifiers to dynamically detect the object types in real-time based on the input sensing information. 

However, limited resources of the devices hinder the wide promotion of AIoT. Specifically, the DL model sizes are becoming larger and more computation-intensive, but the end devices are generally resource-constrained and tasks executed on them are typically delay-sensitive (telemedicine, autonomous driving, etc.), which require a massive amount of computing resources \cite{EdgeAI}. One promising approach to make real-time DNN inference on resource-constrained devices feasible is device-edge-based collaborative inference (co-inference) \cite{co-inference}, by leveraging the principle of edge computing. For co-inference, the pre-trained DNN is partitioned into two parts: the front-end part and the back-end part. And the device executes the front-end part locally and then offloads the corresponding intermediate representation to an edge server in proximity which executes the back-end part via high-quality wireless access enabled by 5G/6G networks. Co-inference is widely regarded as a key building block for distributed DNN task processing systems and has been intensively studied from the perspectives of execution speed acceleration and cost optimization \cite{CoEdge2,CoEdge3,CoEdge4}. However, there are still two critical issues that are much less understood in the literature. 

On one hand, when the local data of the devices and the original dataset that pre-trains the DNN are not independent and identically distributed (non-iid), and the number of the training samples is limited, model accuracy will drop significantly \cite{non-iid}. A straightforward method is to prepare a model for each distribution by the server in advance.  Further, two wiser methods to solve the non-iid issue are transfer learning \cite{transfer1, transfer2} and split learning \cite{split}. However, all these methods typically require a unique customized DNN model or multiple models\footnote{A single user's data may span multiple distributions, and different users may share the same distribution} for different individual users which would incur high storage and computing costs and low scalability. Thus, preparing a model for each distribution by the server in advance, transfer learning, and split learning cannot be directly applied in resource-constrained AIoT applications without adaptation.

On the other hand, co-inference is vulnerable to privacy attacks, which could lead to significant information leakage. Particularly, the intermediate representation of DNN offloaded from the device to the edge server actually contains a lot of feature information of data from the devices. The edge server or any malicious attackers after intercepting the intermediate representation, after receiving the intermediate representation, can launch model inversion attacks (i.e., instance recovery attack) to reconstruct raw data \cite{InversionAttack1,InversionAttack2,privacy-issue-add}. Also, the membership inference attack \cite{MembershipAttack1,MembershipAttack2} can be launched to infer whether an instance is in the training dataset. Indeed, many countries and regions have enacted privacy protection-related laws, e.g., General Data Protection Regulation (GDPR) in European Union. Therefore, device-edge-based co-inference services are highly desirable to resolve critical privacy issues.

Traditional cryptography-based privacy-preserving solutions, such as homomorphic encryption (HE) \cite{HE}, secure multi-party computing (SMC) \cite{SMC}, trusted execution environment (TEE) \cite{TEE}, etc., can guarantee full privacy protection, but these solutions either introduce severe computational delay or require special equipment to support \cite{HE1,HE3,SMC1,co-inference-add}. So it is overhead-prohibited and impractical to apply them in many application scenarios of AIoT. Also, these solutions are \textbf{\textit{task-independent}} and oblivious to what meaning the bits convey or how they would be used.

To prevent or limit privacy leakage, a handful of solutions exploit lightweight privacy-preserving DNN inference recently. Most of them assume that the primary information related to the task of DNN inference is regarded as non-sensitive content that can be exposed, but the extra information other than that is considered to be private. Hence, the goal of these works is to minimize the extra information while maintaining the primary information. For example, given a human face dataset and a gender classifier trained on the dataset, these works regard the gender information as non-sensitive content but extra information, e.g., hairstyle, as private information. To achieve this goal, different kinds of approaches are proposed, such as dynamic noise injection \cite{Shredder, Cloak}, adversarial training \cite{GANdefense1,GANdegense2, IntermRep2}. These solutions are \textbf{\textit{task-oblivious}}, that is, the task output of the classifier is not protected at all. However, these output predictions themselves are usually private information of the device's local data and downstream task processing,  which would risk significant privacy leakage. From the perspective of DL classification tasks, we coin this as a semantic privacy protection issue, since for many application scenarios the input context for a DL classifier can be images, videos, or texts and its output is a classified label to represent the key semantic meaning (e.g., image recognition and object detection) or the action/response under the relevant semantic setting (e.g., vision-based UAV navigation control). 

% \vspace{-20pt}
Motivated by the above two critical issues, in this paper, we aim at answering the following key question: \textbf{Can we design a \textit{task-oriented} privacy-preserving co-inference framework that provides semantic privacy for deep learning classifier and achieves acceptable model accuracy?} 

\subsection{Framework Sketch}
In this paper, we propose Roulette\footnote{We name the proposed framework Roulette because we break the original data-label correspondence randomly and inject noises during the operation, and the uncertainty philosophy is similar to the Roulette game where the numbers are randomly distributed around the circle and the artificial rotation is "noisy".}, a semantic privacy-preserving co-inference framework for deep learning classifier. Specifically, to deal with non-iid issues, we build on traditional split learning and retrain the front-end DNN parameters using a local dataset of the devices. In this way, we obtain a device-specific front-end DNN model to defend against accuracy degradation due to non-iid, instead of using a pre-trained front-end model. As for the privacy issue, in this paper, more than input data, we treat the ground truth of the data at the device in terms of classification task as sensitive semantic privacy that we should protect. To this end, different from traditional split learning that trains the entire model as a whole, we keep back-end model at the edge server fixed as the deterministic function. And we then \textbf{train the front-end DNN part to be able to map inputs with a ground truth $y$ to the intermediate representations which can be classified by the deterministic function, i.e., the back-end part at the edge server, as another ground truth besides $y$.} In other words, the front-end model is trained to be not only a feature extractor but also an encryptor. To do this, we replace the original data-label correspondence with randomly generated mapping before training and use the new mapping to compute classification loss (i.e., cross-entropy \cite{CEL}) during training. Meanwhile, to enforce the distribution of the intermediate representation with different mappings to be consistent and indistinguishable, we introduce a novel distance-minimizing term into the loss function to minimize the difference between the original and new intermediate output. Further, we also leverage the differential privacy mechanism which adds random noise to intermediate results. 

The main advantage of such split learning in terms of security is that it can protect the information about the device's specific model architecture and parameters against the edge server and other adversaries, which makes it the server hard to infer any information and thus, protects the privacy, but the original split learning does not provide any privacy theoretical guarantees. Instead, in this paper, we provide differential privacy guarantees and prove the hardness of ground truth inference attacks given the intermediate representation and local model parameters. It is worth noting that the training phase is offline, and the training of the front-end part can be performed on the device owner's private server rather than directly on the device in order to accelerate the training process. In the online inference phase, the device executes the front-end part to generate an encrypted intermediate representation (instead of the actual intermediate representation) and offloads it to the edge server. 

% More specifically, as is shown in Fig. \ref{fig:schematic}, in Roulette the original large DNN model is partitioned into two parts: the front-end part needs re-training by the device using its local data, and back-end part is fixed and executed on the edge server. The whole procedure consists of training phase and inference phase. In the training phase, the device first communicates the intermediate representation and average loss with the edge server which returns the intermediate partial derivatives to the device for parameters upgrade. Differently from traditional split learning, the whole training process of Roulette advocates a novel paradigm such that only the first-end part is trained by each specific device, and actually the back-end part at the server is fixed and can be commonly used by different devices. 

To illustrate our framework more vividly, we give an example of inference phase operation, as shown in Fig 1. Consider a multi-class classification task with classes "dog" and "cat". Suppose a data instance has the true label "cat". Pass the instance into the re-trained front-end model to generate an intermediate representation. The server then gets the inference result "dog" from the intermediate representation instead of "cat", and sends the "dog" back to the devices. Then the device looks up the pre-defined mapping table to obtain the real result "cat''. 

\begin{figure}[h]
  \centering
  \includegraphics[width=\linewidth]{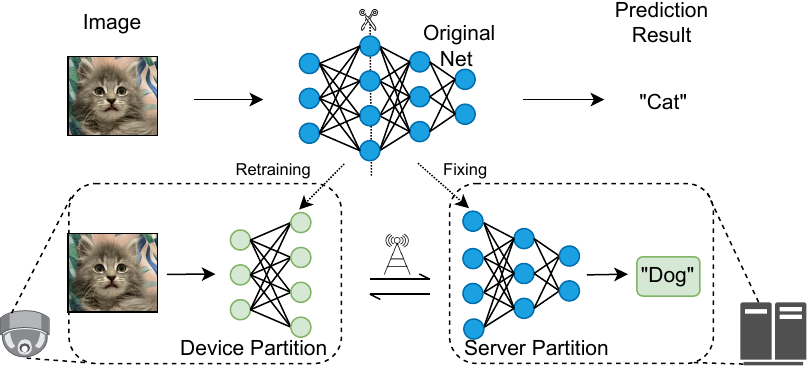}
  \caption{Illustration of the key idea in Roulette.}
  \label{fig:schematic}
  \vspace{-1em}
\end{figure}

%  and differential privacy \cite{DP1}

% However, in most cases, primary information should be treated as sensitive content and kept private. Such as ID number recognition, license plate recognition, etc. Therefore, we define the ground truth of the data as semantic privacy and try to protect it as overriding aim.

% Therefore, in this paper, we take the form of split learning \cite{split} to retrain the front-end DNN parameters of devices on local dataset.

% Further, in order to enhance privacy, based on differential privacy, Roulette allows the edge to inject noise into the intermediate representation before uploading, and randomly flip a proportion of the mapped label before calculation of loss function.  

%We conduct extensive experiments to demonstrate the advantages of our proposed framework. We evaluate the ability of the proposed framework to defend against model inversion attack, and we develop a shadow model attack targeting at semantic privacy. The experimental results show that the proposed framework has obvious advantages compared to other baseline defense methods. At the same time, it can be observed in the experiment that the accuracy of the retrained model has been greatly improved under non-iid. Further, we discuss the impact of the differential privacy budget on the proposed framework. At last, we investigate the how to choose partition point and local structure, and find that choosing deeper partition points and modifying fewer structures is more effective than the opposite.
\subsection{Key Contributions}
The key contributions of this paper are articulated as follows:

\begin{itemize}
% \rev{\item{To our best knowledge, this paper is the first work that training the local network as not only a feature extractor but also an encryptor according to specific end-to-end task in split learning mode, which will be inspiring for potential subsequent research.}}
\item{We propose a novel semantic privacy-preserving co-inference framework of  Roulette for device-edge based deep learning classification tasks which has significant advantages. First, the framework achieves higher accuracy compared with naive co-inference under non-i.i.d. Second, the framework allows device-specific front-end model retraining tailored to local data heterogeneity, and is also friendly for practical deployment such that the edge server can just host a common back-end model to serve multiple heterogeneous devices. }
% Last but not least, the framework allows to do co-inference without sharing devices' own local models, which also ensures the intellectual property (IP) protection in practical applications.
\item{We propose a two-fold privacy protection method. First, we train the front-end model to be not only a feature extractor but also an encryptor to hide the true ground from the server and prove the hardness of ground truth inference attack. Second, we propose a privacy-preserving mechanism for interfering with local data transformations based on differential privacy, and the corresponding privacy guarantee is analyzed .}
\item{We conduct thorough performance evaluations using realistic datasets. We observe that the defense ability of the proposed Roulette framework against privacy attacks is outstanding, e.g., it can effectively thwart the model inversion and shadow model attack, and the differential privacy budget is relatively small. Meanwhile, Roulette can also achieve a good model accuracy under non-iid data distributions.}

\end{itemize}

The rest of the paper is organized as follows. First, we give some preliminaries in Sec. \ref{preliminary}. Then, we present the detailed introduction to the proposed framework in Sec. \ref{fram}. Security analysis is conducted to formally quantify the security of Roulette in Section F. We show evaluation in Sec. \ref{eval}. We review the related works in Sec. \ref{rela}.  Finally, we discuss and conclude in Sec. \ref{conclusion}.

\section{Technical Preliminaries\label{preliminary}}

\subsection{Split Learning and Co-inference}
\textbf{DNN:} A DNN is a parameterized function $f_\theta$ composed of several different types of layers, including convolution layer, pooling layer, fully connected layer, etc. There are multiple neurons in each layer, and the neurons take in the weighted tensor from the front layers and generate activation for the next layers. To apply DNN, a training set $\mathcal{D} = \{ x^{(i)}, y^{(i)} \}_{i=1}^{M}$ is needed, and the optimal parameter $\theta^*$ matching the reflection between $\mathcal{X}$ and $\mathcal{Y}$ supported by $\{x^{(i)}\}_{i=0}^M$ and $\{y^{(i)}\}_{i=0}^M$ is obtained by iterative training on $\mathcal{D}$ through back propagation and stochastic gradient descent, i.e.,
\begin{equation}
  \theta^*= \arg \min\limits_{\theta} (\sum_{i=1}^{M}\mathcal{L}(y^{(i)}, f_{\theta}(x^{(i)})),
\end{equation}
where $\mathcal{L}$ is loss function that measures the degree of deviation between the output of $f_{\theta}$ and the ground truth. The backpropagation procedure calculates the partial derivative of $\mathcal{L}$ with respect to each parameter $\theta_j$ in $\theta$, and then takes an average value over the batch, i.e., $g_j=\frac{1}{S}\Sigma_i\nabla_{\theta_j}\mathcal{L}(y^{(i)}, f_{\theta}(x^{(i)}))$, where $S$ is batch size. The update rule for $\theta_j$ is:
\begin{equation}
  \theta_j=\theta_j-\alpha g_j,
\end{equation}
where $\alpha$ is the learning rate. After training, given a $x$, the DNN can produce a inference result $y=f_{\theta^*}(x)$. For a more detailed introduction to DNN, please refer to \cite{AI, AI2, deeplearning}.

\textbf{Split learning:} Split learning realizes distributed learning by partitioning DNN model into two continuous chunks between the device and edge server. It has gained particular interest due to its efficiency and
simplicity \cite{unleashing}. In split learning, the devices own the
first partition $f_{\theta_1}$ of the model, whereas the edge server maintains the
remaining neural network $f_{\theta_2}$. The devices model’s
architecture and hyper-parameters are decided by the devices
before the training phase. In particular, they agree on a suitable
partition of the deep learning model and send the necessary information to the edge server. The edge server has no decisional power and
ignores the first partition deployed on the devices \cite{distributed_upgrade}.  In split learning, training is performed through a vertically distributed back-propagation. The upgrade of the devices model's parameters is based on the partial derivative sent by the edge server, which is sketched in Fig. \ref{fig:split}. In the case of supervised loss functions, split learning requires
the devices to share the labels with the edge server. To prevent label leakage, loss calculation can also be performed on the side of the device. The edge server sends the output (logits) of $f_{\theta_2}$ to the devices, and the devices calculate the loss and upload it to the edge server.

\begin{figure}[ht]
  \centering
  \includegraphics[width=\linewidth]{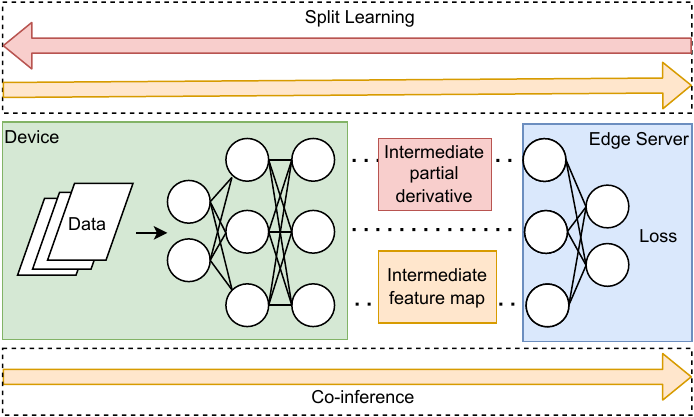}
  \caption{Diagram of split learning and co-inference. }
  \label{fig:split}
\end{figure}

\textbf{Co-inference:} As a prominent solution for fast edge inference, collaborative inference has been widely studied. The basic idea is to partition DNN into multiple partitions, with each part allocated to different participants. In co-inference, a pre-trained DNN $f_{\theta}$ are partitioned into $n$ partitions $ f_{\theta_1} \circ f_{\theta_2} \circ \cdots \circ f_{\theta_n}$ and delivered to $n$ different participants. Given a input $x$, the inference result is $y = f_{\theta_n}(...f_{\theta_2}(f_{\theta_1}(x))...) = f_{\theta}(x)$. Without losing generality, in this paper, we only study the case of $n = 2$. The relationship between split learning and co-inference is shown in Fig. \ref{fig:split}.

In the literature on privacy-preserving co-inference,  some stick to the assumption that the whole model is given, privacy enhancement cannot modify the structure and the fixed parameters of the pre-trained DNN for the primary task, which is called Inference-as-a-service (INFaas) \cite{INFaaS1, INFaaS12,2021unsupervised,Shredder,Cloak}. Others, on the contrary, redesign brand new forms of models $g_{\psi}$ satisfying $g_{\psi}(x) \approx f_{\theta}(x)$ for co-inference to preserve the privacy of $x$, e.g., MixCon \cite{MixCon}, DataMix \cite{DataMix}, Complex Valued Network \cite{ConplexedVluedNetwork}. In our framework, we adopt a half-half strategy that the edge server uses the fixed pre-trained model, while the device retrains a new model, instead of using the original pre-trained model. Our situation is in line with the mode of INFaas but retrains the partition on the device giving full play to scalability. 

\subsection{Differential Privacy}
Differential privacy is a mathematical framework defined for privacy-preserving data analysis, which aims at providing privacy guarantee for sensitive data and is regarded as a standard notion for rigorous privacy \cite{DP2}. The formal definition of $\epsilon$-differential privacy is as follows.

\begin{defn}
\textit{($\epsilon$-differential privacy)}.  Given two neighboring inputs $D$ and $D^{\prime}$ which differ in only one item, a randomized algorithm $\mathcal{M}$ satisfies $\epsilon$-differential privacy if $Pr[\mathcal{M}(D)\in S] \leq e^{\epsilon}\cdot Pr[\mathcal{M}(D^{\prime})\in S]$, where $S$ is any output set of $\mathcal{M(\cdot)}$.
\label{def:1}
\end{defn}

Here, the parameter $\epsilon$ is the privacy budget that trades off the utility and privacy of the output of the algorithm $\mathcal{M}$. A smaller $\epsilon$ indicates stronger privacy protection.  According to this definition, a differentially private algorithm can provide aggregate representations about a set of data items without leaking information of any data
item.

To achieve differential privacy, the common technique is to inject Laplacian noise to the output of the deterministic function $f$:
\begin{equation}
\label{lapM}
\hat{f}(D) = f(D) + Lap \left(\frac{\Delta f}{\epsilon }\right),
\end{equation}
where $\Delta f$ is the global sensitivity of $f$, which is defined as the maximum $L_1$ distance between the results of the function of two neighboring inputs. And, $Lap(\frac{\Delta f}{\epsilon})$ is a random variable sampled from the Laplace distribution with scale $\frac{\Delta f}{\epsilon}$.

% In some cases, global sensitivity is difficult to be estimated. Local Differential Privacy (LDP) \cite{LDP} framework can be applied. The definition of $\epsilon$-Local Differential Privacy is as follows.

% \begin{definition}
% \label{def:2}
% \textit{($\epsilon$-local differential privacy)}. A randomized algorithm $\mathcal{M}$ satisfies $\epsilon$-local differential privacy, if and only if for any pair of input values $v, v^{\prime} \in D$ and for any possible output $S\subseteq Range(\mathcal{M})$, satisfies $Pr[\mathcal{M}(v)\in S]\leq e^{\epsilon} \cdot  Pr[\mathcal{M}(v^{\prime}\rev{\in S})]$
% \end{definition}

% The perturbation mechanism $\mathcal{M}$ is applied to each record independently. The majority of local differential privacy mechanisms are based on the idea of randomized response (RR) \cite{RR}.

Differential privacy has a superb property of being immune
to post-processing \cite{DP1}, which means that any post-processing algorithm will not incur additional privacy leakage.

\subsection{Non-i.i.d. issues of co-inference}
One of the significant challenges of co-inference is the non-i.i.d. between the local data and the original training data, which deteriorates the model accuracy. There are two kinds of non-i.i.d issues in literature, \textit{label skew} and  \textit{attribute distribution skew } \cite{non-iid}.  Label skew means that label distribution is different from the original training dataset. The second kind is \textit{attribute distribution skew}. Attribute distribution skew refers to the attributes of the data sample with the same label being different. For example, the standard MNIST \cite{MNIST} and SVHN \cite{svhn_paper} datasets are ”numbers” labeled 0-9, but the data samples in MNIST are clear and simple and samples in SVHN are accompanied by more interference, noise, and rotation. In this paper, we will tackle this challenge by local retraining part of the network.

\subsection{Threat model}

In our model, there are two parties, devices and an edge server. In order to enable split learning and co-inference, the device has to upload the intermediate representation generated through the DNN deployed on it and loss value for the learning task. In our setting, the server is an \textit{honest but curious} adversary. We assume the server would not manipulate and hijack the learning process of the device’s DNN model \cite{unleashing}. However, an adversary would intercept the intermediate representation
and perform any attacks, e.g., instance recovery attack \cite{InversionAttack1, InversionAttack2}, membership inference attack \cite{MembershipAttack1, MembershipAttack2} and property inference attack \cite{Property}.

\section{The Roulette framework \label{fram}}

We first give a high-level overview of Roulette and then present its components and the technical design, in Section \ref{A}. Roulette is an end-to-end non-invasive solution consisting of two disjoint phases: offline split learning and online co-inference, shown in Fig. \ref{fig:fig3}. The training phase consists of three main parts: forward propagation, loss computation, and backpropagation, which will be elaborated in Section  \ref{B},  Section  \ref{C},  and Section  \ref{D}, respectively. We introduce the online co-inference phase in Section  \ref{E}. 

\subsection{Roulette Overview\label{A}}
% In this section, we will introduce two privacy-preserving mechanism\rev{s} that mitigate information leakage. Next, we introduce the training principle of the model at the device that is compatible with these two mechanisms. Finally, we introduce the inference phase.
\begin{figure*}[t]
    \centering
    \subfigure[Split learning]{
        \begin{minipage}[t]{0.48\linewidth}
        \centering
        \includegraphics[width=\linewidth]{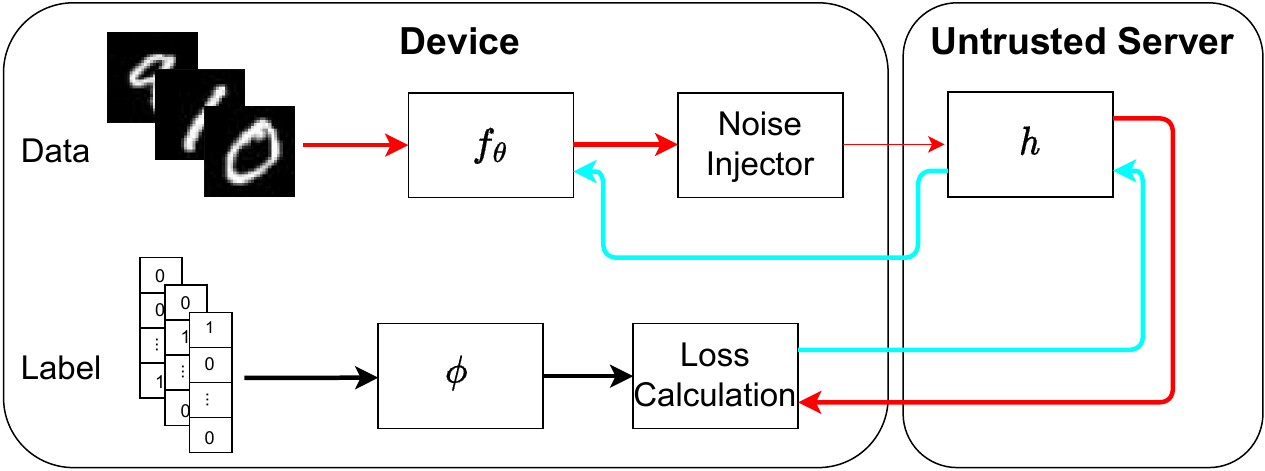}
        \label{fig:fig3a}
        \end{minipage}
    }
    \subfigure[Co-inference]{
        \begin{minipage}[t]{0.48\linewidth}
        \centering
        \includegraphics[width=\linewidth]{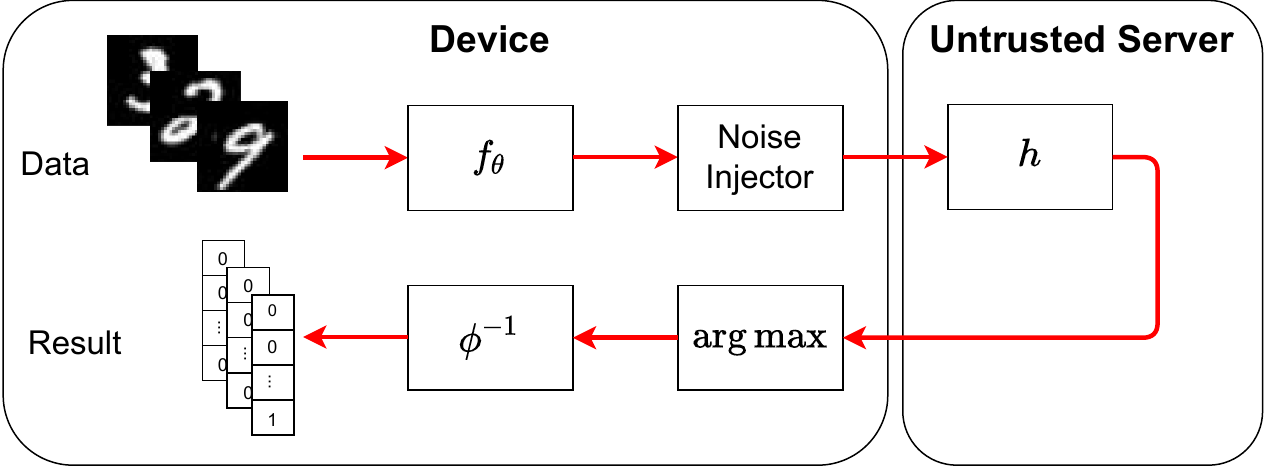}
        \label{fig:fig3b}
        \end{minipage}
    }
    \caption{Workflow of a batch of data: The red arrow and the bright blue arrow represent forward propagation and backpropagation data streams, respectively.}
    \label{fig:fig3}
    \vspace{-1em}
\end{figure*}

% \begin{figure*}[h]
%   \centering
%   \includegraphics[width=\linewidth]{overview.png}
%   \caption{Workflow for learning phase of Roulette }
% \end{figure*}
First of all, we introduce some necessary notations of the framework. Let the label (ground truth) range of an $N$-class classification task be $\mathcal{N}_g=\{0,1,...,N-1\}$. Consider a pre-trained $N$-class classification DNN model $M_{\psi^*}:\mathbb{R}^{p}\rightarrow \mathcal{N}_g$, where $p$ is the flattened input size, and $\psi^*$ is the parameters of the model that fully determines the function. It is worth noting that the actual output of the $N$-class classification DNN $y=M_{\psi^*}(x)$ is a $N$ dimensional one-hot vector, i.e., $y\in \mathbb{R}^{N}$, but for convenience, we use a scalar indicator to represent it. The edge server negotiates with the devices to partition $M_{\psi^*}$ into $M_\psi^* = f_{\psi^*} \circ h_{\psi^*}$, $f_{\psi^*}:\mathbb{R}^{p}\rightarrow \mathbb{R}^{o}$, $h_{\psi^*}:\mathbb{R}^{o}\rightarrow \mathcal{N}_g$, and deploy $h_{\psi^*}$ on the edge server, where $o$ is the flatten dimension of the intermediate representation. It is worth noting that the parameters of the device partition are trained and updated, and the edge server partition is always unchanged. Since $M_{\psi^*}$ will not change after the partition is completed, for simplicity, $h_{\psi^*}$ is denoted as $h^*$, and $f_{\psi^*}$ is denoted as $f^{*}$ as well in the remainder of this paper. The device redesigns a model $f_{\theta}:\mathbb{R}^{p}\rightarrow \mathbb{R}^{o}$ whose input size is $p$ and output size is identical to the input size of $h^*(\cdot)$, where $\theta$ represents the trainable parameters. We denote $\theta_t$ as the parameters of the $t$th-round of training. It is worth noting that the users can just choose the appropriate partition point and make sure the dimension of the uploaded vector is valid (identical to the input dimension of $h^*(\cdot)$). The structure of $f_\theta$ can be decided completely on their own according to their devices' capacity and the users can also use lightweight AutoML \cite{automl} tools to generate the local structure automatically. We denote bijection function $\phi:\mathcal{N}_g\rightarrow\mathcal{N}_g$ as encryption key and its inverse function $\phi^{-1}$ as decryption key. 

Recall the challenge that the local data is non-i.i.d. with the dataset for pre-training the original end-part model, which is one of the heterogeneous natures of the users. Further, the capacity of each user's device is various, i.e., different memory, CPU/GPU frequency, and battery status. Roulette allows each user to choose the partition point and design their own \textit{device-specific} \textit{personalized} local model (the front part) structure. Specifically, each user can self-define $f_\theta$ by jointly considering the capacity of the local device and volume of the datasets. Then, the user performs multi-round fine-tuning to update the parameters of the front-end model. Finally, the user will obtain a personalized device-specific local model.

Then, we illustrate the high-level overview of the workflow of a batch of data in Fig. \ref{fig:fig3}. In the training phase, the devices randomly sample a batch of data $x$ from the local training dataset and generate differential private intermediate representation through $f_{\theta_t}(x)$ and noise injector. Next, the noisy intermediate representation is uploaded to the untrusted edge server. Sec. \ref{B} will introduce the detailed differentially private transformation. The server then feeds the intermediate representation to the back-end of pre-trained DNN $h^{*}$, obtains the logits output, and sends the logits to the devices. The above forward-propagation process is marked by a red arrow in Fig. \ref{fig:fig3a}. Each device then encrypts the labels corresponding to the sampled data according to their own encryption key $\phi$ and compute loss function. The detailed encryption generation and loss function design is explained in Sec. \ref{C}. Then, the devices send the loss value to the server, and the server back-propagates the error to $f_{\theta_t}$. Lastly, the devices update the parameters $\theta_t$ to $\theta_{t+1}$. The back-propagation data flow is marked by a bright blue arrow in \ref{fig:fig3a}. The online inference phase is shown in \ref{fig:fig3b}. The procedure of inference is similar to forward-propagation in the training phase except that at the end of inference, each device decrypts the result by the decryption key $\phi^{-1}$.

\subsection{Forward Propagation\label{B}}
The red arrow in Fig. \ref{fig:fig3a} represents the forward propagation data flow and forward propagation is completed by the interaction between the device and the edge server, which will be introduced in the next.

\textbf{Data Transformer with Device Partition.} To extract the training (inputs, output) pairs for learning the device partition's parameters, a batch of the training data $x_l$ is supposed to be fed to $f_{\theta_t}$. Inspired by the previous conclusion that dropout is a relatively efficient approach to defense against model inversion attack\cite{InversionAttack2} and the earlier dropout, the better defense effects\cite{DP_DL}, we mask the input data with uniformly randomly generated nullification matrix $I_p\stackrel{R}\gets\{0,1\}^p$. And then, the nullified data $x_l^{\prime} = f_{\theta_t}(I_p\odot x_l)$ is attained, where $\odot$ refers to element-wise multiplication of two matrices and the number of zeros in $I_p$ as a percentage of the total number $p$ is the expected nullification rate $\eta$.

\textbf{Noise Injector.} Some works (e.g., \cite{Shredder, Property}) indicated that specific properties or whether an instance in the dataset may be inferred from the intermediate representation given some background knowledge. To alleviate the inference attacks, by employing the strong notion of differential privacy, we design a DP-guaranteed noise injector before sending the intermediate representation to the server. The noise injector adds the noise tensor to the intermediate representation, element-wise. One intuitive attempt of providing $\epsilon$-differential privacy is to add noise that follows the Laplace distribution with scale $\frac{\Delta f_{\theta_t}}{\epsilon}  $ into the output $x_r$ as (\ref{lapM}), where $\epsilon$ is the privacy budget parameter. However, it is difficult to estimate the global sensitivity of a DNN, and we use a clipping process (i.e., $x_r \leftarrow \max(1,\Vert x_r \Vert/B))$ to bound the value of $\Vert x_r \Vert$. It indicates that $x_r$ is preserved when ${\lVert x_r\rVert}_\infty\leq B$, whereas it is scaled down to $B$ when ${\lVert x_r\rVert}_\infty > B$. Then, the global sensitivity can be estimated as $2B$. The bound threshold $B$ is input-independent and so would leak no sensitive information. The bound $B$ can be set as the median of the infinity norm with regard to a batch of training examples \cite{DP_DL}. Further, \cite{IntermRep2} indicates that the Laplace mechanism can be applied at any local layers in the mode of split learning. Even though adding noise into the layer before the last local layer may result in worse accuracy, the differential privacy budget can be tighter by this operation. Our framework allows retraining, which can alleviate accuracy elimination. Therefore, we also take this generalization into account. We denote $f_{\theta_t}=f_{\theta_t}^{(1)}\circ f_{\theta_t}^{(2)}$. $f_{\theta_t}^{(1)}$ and $f_{\theta_t}^{(2)}$ refer to the layers before and after noising adding, respectively. Algorithm \ref{alg:DP} outlines the whole differentially private data transformation on the device side. The formal quantitative differential privacy budget will be given in Sec. F.

In our framework, the dimension of the mechanism's output is relatively much smaller (at the third conv. cut point of the LeNet \cite{lenet}, it is 400) than the method injecting noises into the gradient (around 60000 for the LeNet), thus the strength of the Gaussian mechanism \cite{DP1} which is widely applied in the differentially private DNN training \cite{DP2} cannot be embodied. Further, specific network structure, batch size, and partition point decide the dimension of the mechanism's output. Without losing generality, we employ the Laplace mechanism in this paper for simplicity, but the Gaussian mechanism can also be extended easily.

\begin{algorithm}[ht]
\renewcommand{\algorithmicrequire}{\textbf{Input:}}
\renewcommand{\algorithmicensure}{\textbf{Output:}}
\caption{ Differentially Private Transformation}
\label{alg:DP}
\begin{algorithmic}[1]
\REQUIRE Input data $x_l$;  Bound threshold $B$; Privacy
budget $\epsilon$; Nullification matrix $I_p$
\ENSURE Noisy intermediate representation $\widetilde{x}_r$.
\STATE $x_r\gets f_{\theta_t}^{(1)}(I_p\odot x_l)$
\STATE $d={\lVert x_r\rVert}_\infty$
\STATE $x_r\gets x_r/\max (1, \frac{d}{B})$
\STATE $x_r \gets x_r+Lap(2B/\epsilon)$
\STATE $\widetilde{x}_r\gets  f_{\theta_t}^{(2)}(x_r)$ 
\RETURN $\widetilde{x}_r$ 
\end{algorithmic}
\end{algorithm}

% \ding{184} \textbf{Transmission.} device sends the noisy intermediate representation to the server through the communication network. \ding{186} \ding{188} \ding{190} also refer to remote transmission, which is represented by a dotted line in Figure 1. 

\textbf{Logits Generator with Server Partition.}
To be able to go through the learning process, a batch of logits which is the outputs of the layer before the last classification layer of the neural network, usually softmax, is required to pair with the labels. To generate the logits, the noisy intermediate representation is fed to the cloud partition, i.e,
\begin{equation}
   \hat{y}=h^{*}(\widetilde{x}_r).  
\end{equation}
And then, the edge server sends the logits to the device for loss calculation.

\subsection{Loss Function Calculation\label{C}}

In this subsection, we will elaborate on how encryption key $\phi$ generates and loss function design. 

\subsubsection{Encryption and Decryption Key Generation}
The encryption and decryption key generation is finished before the beginning of the training and will not change during training for each device. Recall that our front-end model $f_{\theta}$ should not only be a feature extractor but also an encryptor. The feature extractor function can be achieved by step-by-step parameters update based on normal classification error back-propagation. To achieve the encryptor function, we would like to break the original data-label correspondence and apply a randomly generated new one. In other words, we need to randomly generate a \textit{derangement}. To be specific, given the ground truth set $\mathcal{N}_g$, we denote \textit{group} formed by all \textit{permutations} (i.e., bijective functions from a set to itself) without repetitions and without fixed points ($\phi(x)=x$) of $\mathcal{N}_g$ as $\mathcal{S}$. Then, our aim is that randomly select a permutation from $\mathcal{S}$, i.e.,  $\phi\stackrel{R}\gets\mathcal{S}$, as the new data-label correspondence. $\phi$ plays a role of encryption key and the inverse function $\phi^{-1}$ is decryption key. To illustrate the mapping clearer, we give an example in Cauchy's two-line notation\cite{group}. A particular $\phi$ of the $\mathcal{N}_g=\{0,1,...,4\}$ can be written as
\begin{equation}
	\phi=\begin{pmatrix} 0 & 1 & 2 & 3 & 4 \\ 2 & 3 & 4 & 0 & 1  \end{pmatrix},
\end{equation}
this means that $\phi$ satisfies $\phi(0)=2,\ \phi(1)=3,\ \phi(2)=4,\ \phi(3)=0$, and $\phi^{-1}(2)=0,\ \phi^{-1}(3)=1,$ etc.

Further, it is impractical to form $\mathcal{S}$ first and then uniformly sample from $S$ in practice, because $|\mathcal{S}|=\Theta((N-1) !)$ (lower bounded by the number of \textit{circular permutations}). A naive approach is to use a standard shuffle algorithm first and then swap the elements that  do not change their positions with nearby elements. However, it is non-trivial to make the outcome confirm to the uniform distribution. Thanks to C. Mart{\'\i}nez et al. (2008) proposed an elegant algorithm for generating random derangements \cite{martinez2008generating}. We employ the elegant algorithm as our encryption key generator. The \texttt{KeyGen} algorithm is presented as Algorithm \ref{alg:KeyGen}, where
\begin{equation}
    D_i= \lfloor \frac{i!+1}{e} \rfloor,
\end{equation}
which is the exact number of derangements given $i$. The average cost of the algorithm is $e\cdot n+o(n)$ from the perspective of numbers of generating random numbers. Then, we will use the generated mappings (keys) to encrypt the local labels. The encryption ensures the complexity of the searching space, e.g., $|\mathcal{S}|=\Theta((N-1) !)$, of the shadow model attacker with background knowledge of dataset distribution but without exact model parameters. 

\begin{algorithm}[ht]
\renewcommand{\algorithmicrequire}{\textbf{Input:}}
\renewcommand{\algorithmicensure}{\textbf{Output:}}
\caption{ Encryption Key Generator}
\label{alg:KeyGen}
\begin{algorithmic}[1]
\STATE \textbf{function} \texttt{KeyGen}({$N$})
\STATE // $N$: total number of classes
\STATE $Key \gets [0,1,...,N-1]$
\FOR{$i=0$ to $N-1$} 
\STATE $mark[i]\gets false$
\ENDFOR
\STATE \ $i\gets n$;\ $u\gets n$
\WHILE{$u\ge 1$}
\IF{$\neg mark[i]$}
\STATE\textbf{repeat} $j\gets$RANDOM$(0,i-1)$\\
\STATE\textbf{until} $\neg mark[j]$\\
\STATE \texttt{Swap}($Key[i], \, Key[j]$)
\STATE $p\gets $UNIFORM$(0,1)$
\IF {$p<(u-1)D_{u-1}/D_{u}$}
\STATE $mark[j]\gets true$; $u \gets u-1$ 
\ENDIF
\STATE $u\gets u-1$
\ENDIF
\STATE $i\gets i-1$
\ENDWHILE
\RETURN $Key$
\STATE // return $\phi(\cdot)$
\end{algorithmic}
\end{algorithm}

\subsubsection{Loss Function and Training Method}
Our loss function involves two terms. The first term $\mathcal{L}_{class}$ is to calculate the classification error. Instead of taking the $\hat{y}$ and the corresponding labels $y$ directly, we encrypt $y$ by encryption key $\phi$ first, i.e., 

\begin{equation}
\label{eq:class}
    % \mathcal{L}_{class}:=\mathcal{L}_{CE}(\phi(y), \hat{y}),
    \mathcal{L}_{class}=-\frac{1}{M}\sum_{i=1}^{M}\sum_{j=1}^{N}\phi(y_j^{(i)})\log(\hat{y}_j^{(i)}),
\end{equation}
where $M$ is batch size and $y$ is the true label corresponding to the data $x_l$.

Further, to make the intermediate representation under different $\phi$ more indistinguishable, we need to add a term to minimize the distance between $\widetilde{x}_r$ and $f^*(x_l)$, i.e., 
\begin{equation}
    \mathcal{L}_{dist}=dist(\widetilde{x}_r, f^*(x_l)),
\end{equation}
where $dist(\cdot)$ is supposed to be the distance measurement function. However, it is difficult to find an appropriate explicit distance function due to intermediate representation space is generally high-dimensional, and the decision boundary is non-linear. We notice that the distance we concern actually depends on the classification decision boundary. Hence, it is very suitable to apply Generative Adversarial Network (GAN) \cite{GAN} to implicitly minimize the distance. To this end, we introduce an additional DNN  $D_{\pi}$ parameterized by $\pi$ to simulate a discriminator that distinguishes whether the intermediate representation is generated by the original network. Then, the $\mathcal{L}_{dist} $ can be written as,
\begin{equation}
\label{eq:distGAN}
    \mathcal{L}_{dist}=\frac{1}{M}\sum_{i=1}^{M}[-t\log(D_{\pi}(z^{(i)}))-(1-t)\log(1-D_{\pi}(z^{(i)})))],
\end{equation}
where $t$ is the intermediate representation type indicator, i.e.,
\begin{equation}
t=\left\{
\begin{aligned}
&1,    &&z^{(i)}= f^*(x_l)^{(i)};\\
&0,      &&z^{(i)}= \widetilde{x}_r^{(i)}.
\end{aligned}
\right.
\end{equation}
Therefore, the final loss function and the $\min$-$\max$ optimization problem can be written as,
\begin{equation}
    \min_{\theta}\ \max_{\pi}\quad \mathcal{L}_{cd}= \mathcal{L}_{class}+\mathcal{L}_{dist}.
\end{equation}
Algorithm \ref{alg:HTM} sketches the hybrid training method of Roulette. Before performing the Algorithm \ref{alg:HTM}, the device first pretrain the discriminator normally without task objective to obtain the best performance on distinguishing the intermediate representation type. Then within each training batch, the device first updates $\pi$ by maximizing $\mathcal{L}_{dist}$. Then, $\theta$ will be updated by minimizing $\mathcal{L}_{cd}$.
\begin{algorithm}[ht]
\renewcommand{\algorithmicrequire}{\textbf{Input:}}
\renewcommand{\algorithmicensure}{\textbf{Output:}}
\caption{ Hybrid Training Method}
\label{alg:HTM}
\begin{algorithmic}[1]
\REQUIRE Local data $\mathcal{D}$
\ENSURE $\theta$
\FOR{every epoch} 
\FOR{every batch}
\STATE $\mathcal{L}_{dist}\rightarrow$ update $\pi\ $ (maximizing $\mathcal{L}_{dist}$)
\STATE $\mathcal{L}_{cd}\rightarrow$ update $\theta\ $ (minimizing $\mathcal{L}_{cd}$)
\ENDFOR
\ENDFOR
% \RETURN $\widetilde{x}_r$ 
\end{algorithmic}
\end{algorithm}

\subsection{Back Propagation\label{D}}

The update of $\pi$ can be done locally, but back-propagate the error to $f_\theta$ involving $h^*$. We notice that very little literature on split learning clarifies how back-propagation exactly works in split learning mode, so we elaborate on it first. Assuming the DNN $f_{\theta} \circ h^{*}$ has a total of $k$ layers and the partition point is layer $i$, i.e., $f_{\theta}=l_1 \circ l_2 \circ ... \circ l_i$ and $h^{*}= l_{i+1}\circ...\circ l_k$. We denote the trainable parameters of any $j$-th layer as $(W_{j}, \mathbf{b}_j)$, $j\in(1,k)$.The forward propagation operation for each convolutional or fully connected layer in feed-forward neural networks can be formalized as:
\begin{equation}
    \mathbf{x}_{j+1}=\sigma(W_{j}^{\mathsf{T}}\mathbf{x}_{j}+\mathbf{b}_{j}),
\end{equation}
where $\mathbf{x}_{j}$ refers to the input feature map of the $j$-th layer, $W_{j}^{\mathsf{T}}\mathbf{x}_{j}+\mathbf{b}_{j}$ is the affine transformation (including convolutional and fully connected layer) of the input vector. $\sigma$ is a non-linear layer including activation function, pooling, etc., or identity function (split before the non-linear layer). Then the partial derivative (gradient) of $j$-th layer's parameters respect to loss function $\mathcal{L}_{cd}$ can be calculated as:
\begin{equation}
\label{eq:ParaUp}
\begin{split}
    \nabla_{\mathbf{b}_{j}}\mathcal{L}_{cd}&=\nabla_{\mathbf{x}_{j+1}}\mathcal{L}_{cd}\odot\sigma^{\prime}(W_{j}^{\mathsf{T}}\mathbf{x}_{j}+\mathbf{b}_{j}),\\
    \nabla_{W_{j}}\mathcal{L}_{cd}&=\mathbf{x}_{j}(\nabla_{\mathbf{x}_{j+1}}\mathcal{L}_{cd}\odot\sigma^{\prime}(W_{j}^{\mathsf{T}}\mathbf{x}_{j}+\mathbf{b}_{j}))^{\mathsf{T}},\\
    \nabla_{\mathbf{x}_{j}}\mathcal{L}_{cd}&=W_{j}(\nabla_{\mathbf{x}_{j+1}}\mathcal{L}_{cd}\odot\sigma^{\prime}(W_{j}^{\mathsf{T}}\mathbf{x}_{j}+\mathbf{b}_{j})).
\end{split}
\end{equation}
It can be seen that the device owns $\mathbf{x}_{i}$ and $\sigma^{\prime}(W_{i}^{\mathsf{T}}\mathbf{x}_{i}+\mathbf{b}_{i})$ locally. The only necessary thing to update the $i$-th and before $i$-th layer's parameters from the server is $\nabla_{\mathbf{x}_{i+1}}\mathcal{L}_{cd}$, which is independent on the $f_{\theta}$. This is also the basis that the devices can devise their own structure freely. Sending $\mathcal{L}_{cd}$ to the server, the server can back-propagate the error to the spilled layer by $h^{*}$. Therefore, the back-propagation can be summarized as follows:

\textbf{Back Propagation with Server Partition.} After the loss calculation is completed, the loss value $\mathcal{L}_{cd}$ is sent to the edge server for back propagation. Then, the edge server back propagates gradient and the devices download the intermediate partial derivative $\nabla_{\mathbf{x}_{i+1}}\mathcal{L}_{cd}$.

\textbf{Back Propagation and Parameter Update with Device Partition.} 
After receiving $\nabla_{\mathbf{x}_{i+1}}\mathcal{L}_{cd}$, based on (2) and (\ref{eq:ParaUp}) the devices update the parameters of $f_{\theta}$.

\subsection{Online Co-Inference\label{E}}
After split learning, Roulette is prepared to do co-inference, which is depicted in Fig. \ref{fig:fig3b}. The device samples a batch of validated data and feds it into $f_{\theta}$, and then executes Algorithm \ref{alg:DP} to generate differentially private intermediate representation $\widetilde{x}_r$ and offloads it to the edge server. The edge server computes the back-end DNN and sent the logits $h^{*}(\widetilde{x}_r)$ back to the device. Finally, the device decrypts it and obtains the real prediction result according to
\begin{equation}
    \widetilde{y} = \phi^{-1}(\arg \max h^{*}(\widetilde{x}_r)). 
\end{equation}

We should emphasize that the front-end model training by the device can be done offline. The user can embrace devices with relatively more affluent resources than online inference. Also, offline training means the training procedure is not delay-sensitive. For instance, during the training stage, the user can perform the local training procedure on a private server  (interact with the remote server). Instead, for the deployment (inference stage), the trained local model has to be deployed on resource-constraint real-world devices (e.g., cameras, UAVs) for real-world applications. In this case, the real-time requirement of offline training is lower than the online inference stage.

% While, for the inference, the computation of the  front-end model inference is lightweight and is only performed once, which is feasible for online execution at the device. 

% , and a private server owned by the device user can be used to accelerate the training process
\section{Security Analysis\label{F}}

Our privacy protection functions in a two-fold way: differential privacy and harness of ground truth inference attack.

\subsection{Differential Privacy}
We add random noise sampled from the Laplace distribution into the bounded output $x_r$ and dropout a proportion of elements of the input image to disturb data as differential privacy mechanism. One important issue of differentially private transformation is determining the privacy budget that represents a rigorous privacy guarantee provided by the designed mechanism. We formally prove the following result.

\begin{thm}
\label{thm:DPbudget}
Given the sensitive data $x_l$, the local DNN $f_{\theta_t}$ and nullification matrix $I_p$ with nullification rate $\eta$, Algorithm \ref{alg:DP} is $\varepsilon$-differentially private, where
\begin{equation}
    \varepsilon=\ln\left[(1-\eta)e^{\epsilon/\Xi} +\eta \right],
\end{equation}
and $\Xi=\lVert \nabla_{x_r}f_{\theta_t}^{(2)}\rVert_\infty$, where $x_r=f_{\theta_t}^{(1)}(x_l\odot I_p)$.
\end{thm}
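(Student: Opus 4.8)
The plan is to isolate the two independent sources of randomness in Algorithm~\ref{alg:DP} --- the nullification mask $I_p$ (line 1) and the Laplace perturbation $Lap(2B/\epsilon)$ (line 4) --- and to treat every deterministic step ($f_{\theta_t}^{(1)}$, the clipping of line 3, and the post-map $f_{\theta_t}^{(2)}$ of line 5) through sensitivity and Lipschitz bounds. Writing $\mathcal{M}$ for the whole transformation $x_l\mapsto\widetilde{x}_r$, the goal is to bound the output ratio $\Pr[\mathcal{M}(x_l)\in S]/\Pr[\mathcal{M}(x_l')\in S]$ for two inputs differing in a single coordinate by $e^{\varepsilon}$. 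I would first establish a ``base'' budget conditioned on the differing coordinate surviving the mask, and then fold in the randomness of the mask to recover the amplified expression.

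For the base step, condition on the event that $I_p$ keeps the coordinate in which $x_l$ and $x_l'$ differ. The clipping in line 3 forces $\lVert x_r\rVert_\infty\le B$, so the clipped intermediate has $L_1$-sensitivity at most $2B$, and the raw Laplace mechanism of line 4 renders $x_r+Lap(2B/\epsilon)$ $\epsilon$-differentially private in the intermediate space. The only remaining --- and genuinely delicate --- task is to carry this guarantee through the nonlinear post-map $f_{\theta_t}^{(2)}$ to the released quantity $\widetilde{x}_r$. I would do this by a change of variables on the output density, in which the Jacobian of $f_{\theta_t}^{(2)}$ enters and is controlled by $\Xi=\lVert\nabla_{x_r}f_{\theta_t}^{(2)}\rVert_\infty$; tracking how the injected noise is reshaped relative to the sensitivity across $f_{\theta_t}^{(2)}$ yields an effective per-instance budget $\epsilon_0:=\epsilon/\Xi$ rather than $\epsilon$. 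This is exactly the ``tighter budget'' effect claimed for injecting noise before the last local layer, and I expect it to be the main obstacle, since a careless appeal to post-processing immunity would suppress the factor $\Xi$ altogether.

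With the base budget $\epsilon_0$ in hand, I would obtain the amplification by conditioning on the mask at the differing coordinate. With probability $\eta$ that coordinate is nullified, in which case $I_p\odot x_l$ and $I_p\odot x_l'$ coincide and the two output distributions are identical (ratio $1$); with probability $1-\eta$ it is kept, and the base analysis gives ratio at most $e^{\epsilon_0}$. Writing $\mu(S)=\eta\,\nu(S)+(1-\eta)\,\mu_1(S)$ and $\mu'(S)=\eta\,\nu(S)+(1-\eta)\,\mu_1'(S)$, where $\nu$ is the common nullified-case distribution and $\mu_1,\mu_1'$ the kept-case distributions, I would exploit the crucial structural fact that zeroing a single coordinate perturbs the clipped intermediate by at most the sensitivity, so that $\nu$ is itself within a factor $e^{\pm\epsilon_0}$ of both $\mu_1$ and $\mu_1'$. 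Maximizing $\mu(S)/\mu'(S)$ subject to these constraints, the extremum is attained at $\mu_1(S)=e^{\epsilon_0}\nu(S)$ and $\mu_1'(S)=\nu(S)$ and equals exactly $(1-\eta)e^{\epsilon_0}+\eta$, giving $\varepsilon=\ln[(1-\eta)e^{\epsilon/\Xi}+\eta]$ after integrating over the shared nullification of the remaining coordinates.

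Beyond the $\Xi$-step already flagged, the remaining care lies in the amplification bookkeeping: the naive mixture bound alone admits a ratio as large as $e^{\epsilon_0}$, so the constraint that $\nu$ is sandwiched between the two neighbor distributions is essential and must be derived from the clipping bound rather than assumed. Confirming that the infinity-norm clipping indeed yields the $2B$ sensitivity used in line 4, and absorbing the simultaneous independent nullification of the non-differing coordinates into the common randomness by conditioning, are the routine but necessary verifications that complete the argument.
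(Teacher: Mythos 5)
Your overall architecture matches the paper's: a base Laplace budget for the noised intermediate representation carried through $f_{\theta_t}^{(2)}$, then amplification over the randomness of the nullification mask, combining to $\varepsilon=\ln[(1-\eta)e^{\epsilon/\Xi}+\eta]$. Your amplification half is correct, and is in fact more careful than the paper's own: the paper's appendix lemma takes the neighboring relation to be of the add/remove type ($x_1=x_2\cup i$, so the differing coordinate of $x_2$ is already zero), whence the nullified-case conditional distribution coincides with the neighbor's unconditional distribution and the mixture bound $\eta+(1-\eta)e^{\epsilon_0}$ is immediate. You instead treat genuine replace-neighbors, and your observation that the common nullified distribution $\nu$ must itself be sandwiched within $e^{\pm\epsilon_0}$ of both kept-case distributions --- without which the mixture ratio can only be bounded by $e^{\epsilon_0}$ --- together with the extremal computation at $\mu_1=e^{\epsilon_0}\nu$, $\mu_1'=\nu$, is exactly the right repair and yields the same constant.

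The genuine gap is in the step you yourself flag as the main obstacle: producing the factor $\Xi$. A change of variables on the output density cannot do it. If $f_{\theta_t}^{(2)}$ is (locally) invertible, the output densities under the two neighboring inputs are both the Laplace-smoothed intermediate densities evaluated at $\bigl(f_{\theta_t}^{(2)}\bigr)^{-1}(z)$ multiplied by the \emph{same} Jacobian factor; in the likelihood ratio this factor cancels identically, and you land back at the plain post-processing bound $\epsilon$ --- precisely the outcome you said you wanted to avoid, with no $\Xi$ anywhere. The paper obtains $\epsilon/\Xi$ by a different, and frankly heuristic, device: a first-order Taylor expansion $f_{\theta_t}^{(2)}(x_r+Lap)\approx f_{\theta_t}(x_l)+(\nabla_{x_r}f_{\theta_t}^{(2)})^{\mathsf{T}}Lap(2B/\epsilon)$, which recasts the released quantity as a deterministic function (still treated as having sensitivity $2B$) plus noise amplified by the factor $\Xi=\lVert\nabla_{x_r}f_{\theta_t}^{(2)}\rVert_\infty$, followed by a separate lemma stating that a mechanism $f(x)+a\,Lap(2B/\sigma)$ with $|f(x)|\le B$ is $(\sigma/a)$-differentially private; setting $a=\Xi$, $\sigma=\epsilon$ gives $\epsilon/\Xi$. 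The essential move --- and it is an approximation, not a theorem --- is that the noise is rescaled by the Jacobian while the sensitivity bound is \emph{not}; that asymmetry is exactly what no exact density manipulation (change of variables, post-processing) can deliver, so your plan as written would not produce the claimed budget.
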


See Appendix A for the proof of Theorem \ref{thm:DPbudget}. The privacy guarantee we state is on both re-training (split learning) and co-inference and it is element-wise event-level differential privacy\cite{DP1}. Differential privacy mechanism is a powerful guarantee that can eliminate a single element's shift on the output to hide information. Given an appropriate privacy budget, it can protect the data from membership inference attack\cite{MembershipAttack1,MembershipAttack2}.

\subsection{Hardness of ground truth inference attack}
Even though differential privacy provides a strict guarantee, the statement of avoiding holistic statistical privacy violation\cite{IsDP} is unclear. Our ground truth privacy exactly falls into this kind of violation. Hence, we need to analyze the ground truth privacy more carefully besides differential privacy. Next, we will analyze the ground truth privacy when the server has different level of background knowledge.

\textbf{Without Parameters and Without Dataset Distribution}. First of all, the privacy of inference results can be derived from the confidentiality of local parameters and the dataset of split learning. Several previous works with split learning also make such a statement \cite{split}. To our best knowledge, there is no case of a successful attack when the edge server knows neither device's local parameters nor dataset distribution. 

\textbf{With Parameters but Without Dataset Distribution}. Further, we investigate the stronger attacker that knows the local model's parameters but does not know the local dataset distribution. Such an attacker can perform a ground truth inference attack. We give the following definition:
\begin{defn}
Given the front-end model  $f_\theta$ and intermediate representation $x_r$, we say the attacker has a perfect ground truth inference function $c:\mathbb{R}^{p}\rightarrow \mathcal{N}_g$ that
\begin{equation}
    c(x^{\prime})=y,
\end{equation}
if and only if $x^{\prime}\in\{  x^{\prime}| f_\theta (x^\prime)=x_r\}$, where $y$ is the true label corresponding to input data.
\label{def:PerInf}
\end{defn}
Roughly speaking, Definition \ref{def:PerInf} implies that given $f_\theta$, if the server can find the input pattern that can be exactly matched to the received $x_r$, then the server is able to figure out the ground truth. This definition is too strict for the attacker in reality. One can imagine that even if the attacker finds an approximate input pattern that can be nearly matched to $x_r$ based on $f_\theta$, it can also link $x_r$ to the ground truth. Therefore, we give a relaxed version of Definition \ref{def:PerInf} as follows:

\begin{defn}
\label{def:RelaPerInf}
Given the front-end model  $f_\theta$ and intermediate representation $x_r$, we say the attacker has a $\gamma$-approximate ground truth inference function $c:\mathbb{R}^{p}\rightarrow \mathcal{N}_g$ that
\begin{equation}
    c(x^{\prime})=y,
\end{equation}
if and only if $x^{\prime}\in\{ x^{\prime}| \ \Vert f_\theta (x^\prime)-x_r\Vert_2\le\gamma\sqrt{o}\}$, where $y$ is the true label corresponding to input data.
\end{defn}

In Definition \ref{def:RelaPerInf}, the $\sqrt{o}$ factor is only for normalization and not essential. Based on the definition, we prove the hardness of the attacker to find a $\gamma$-approximate ground truth inference function. The theorem is formally stated below.

\begin{thm}
\label{thm:hardness}
There exists a constant $Q>1$, unless \textsf{RP=NP}, it is hard  to find a $\frac{1}{60Q}$-approximate ground truth inference function for the attacker.
\end{thm}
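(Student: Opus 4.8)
The plan is to read Definition~\ref{def:RelaPerInf} operationally: any attacker who can evaluate a $\gamma$-approximate ground truth inference function can, in particular, certify membership in the approximate preimage $\{x': \|f_\theta(x')-x_r\|_2\le\gamma\sqrt{o}\}$, and recovering the ground truth reduces to exhibiting a point of this set. Thus the computational core of the theorem is the \emph{approximate neural-network inversion} problem: given $f_\theta$ and a target $x_r$, find an input $x'$ with $\|f_\theta(x')-x_r\|_2\le\gamma\sqrt{o}$. I would prove Theorem~\ref{thm:hardness} by a gap-preserving reduction from a problem whose constant-factor inapproximability holds under \textsf{RP=NP}, and argue that a $\frac{1}{60Q}$-approximate inverter would decide the underlying promise and so collapse \textsf{RP} to \textsf{NP}.

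For the source I would take a constant-gap Boolean problem --- a gap-CSP or nearest-codeword instance --- shown hard to approximate within a fixed constant by a \emph{randomized} reduction, which is exactly what yields the \textsf{RP=NP} qualifier rather than \textsf{P=NP}. Write the instance with $m$ constraints over $n$ variables, under the promise that the ``yes'' case is fully satisfiable while the ``no'' case leaves at least a constant fraction $c$ of the constraints violated under every assignment. I would then compile the instance into $f_\theta$ using only piecewise-linear (ReLU) gates: each variable becomes an input coordinate wrapped in a small \emph{clamping} gadget that sharply penalizes values bounded away from $\{0,1\}$, and each constraint becomes one output coordinate computed by an $O(1)$-size ReLU sub-network that outputs $0$ when satisfied and a fixed positive amplitude otherwise. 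Taking $x_r=\mathbf{0}$ and $o=\Theta(m)$, the squared objective $\|f_\theta(x')-x_r\|_2^2$ equals, up to the clamping penalties, the number of violated constraints scaled by a constant.

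With this encoding the two promise cases separate in $\ell_2$ norm: a fully satisfying integral assignment gives distance zero, whereas every assignment in the ``no'' case forces distance $\Theta(\sqrt{cm})$. Since $o=\Theta(m)$, the ``no''-distance is $\Theta(\sqrt{o})$, so the threshold $\gamma\sqrt{o}=\frac{1}{60Q}\sqrt{o}$ can be placed strictly between the two cases by calibrating the constant gap $c$, the per-constraint output amplitude, and the normalization $\sqrt{o}$; the leftover slack in this balancing is absorbed into the undetermined constant $Q>1$, and the explicit factor $\frac{1}{60}$ falls out of the worst-case ratio between the ``no''-distance and the threshold. A polynomial-time $\frac{1}{60Q}$-approximate inference function would, by this separation, place every instance on the correct side of the gap; carrying through the randomness of the source reduction (and of any randomized rounding used in the analysis) yields the stated \textsf{RP=NP} conclusion.

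The step I expect to dominate the work is the gadget design, specifically ruling out that the continuous $\ell_2$ optimum ``cheats.'' Because inputs range over $\mathbb{R}^p$ rather than $\{0,1\}^n$, I must guarantee that a fractional assignment cannot partially satisfy many constraints at once and thereby beat the best integral assignment; this forces the clamping penalty to grow steeply and uniformly enough that rounding any near-optimal real input to a Boolean point loses only a term negligible against $\sqrt{m}$, which is precisely what lets the promise gap survive the continuous relaxation and keeps the ``yes''-case optimum near zero. The second delicate point is the constant bookkeeping: extracting the specific factor $\frac{1}{60}$ requires tracking the completeness term, the soundness fraction $c$, and the gadget amplitudes simultaneously so that the ``yes'' and ``no'' distances straddle $\gamma\sqrt{o}$ with a genuine margin rather than merely touching it.
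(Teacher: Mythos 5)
Your plan follows essentially the same route as the paper's proof: the paper reduces from bounded-occurrence gap-\textsf{MAX3SAT} (namely \textsf{MAXE3SAT(Q)}, inapproximable within $7/8+5/\sqrt{Q}$ unless \textsf{RP=NP}), encodes each clause as a ReLU output neuron that equals $0$ iff the clause is satisfied, and forces integrality with $100Q^2$ ReLU copies of $|x_i|$ per variable---exactly your clamping gadget---before rounding any near-optimal real input to a Boolean assignment, where the occurrence bound $Q$ makes the quadratic clamping penalty dominate the linear rounding gain. The two delicate points you flag (fractional ``cheating'' and the constant bookkeeping behind $\frac{1}{60Q}$) are precisely the steps the paper's completeness/soundness computation carries out.
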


See Appendix B for the proof. Our proof is inspired by \cite{MixCon, InvHard}. In the proof, we reduce from \textsf{MAX3SAT} problem. Theorem \ref{thm:hardness} implies that by taking a suitable constant $Q>1$, it is hard for the attacker to approximate within some constant factor under Definition \ref{thm:hardness}.

\textbf{Without Parameters but With Dataset Distribution}. If the server knows the distribution of the local dataset but does not know $f_\theta$, it can traverse all possible $\phi$ to train the corresponding front-end model or extract some features. However, the searching space will be extremely huge when $N$ goes slightly big, because $|\mathcal{S}|=\Theta((N-1) !)$. Hence, it is intractable for the server to infer ground truth when the number of classification system outputs is big. When $N$ is small, we investigate the influence of the distribution difference between the server and the device experimentally in Sec. \ref{eval}.

\textbf{With Parameters and With Dataset Distribution.} In this case, the server can easily figure out the ground truth by matching the input sampled from the dataset distribution to the model output. But in reality, it is very difficult for the server to satisfy the background knowledge. In particular, the device has no motive to disclose its model parameters to the server.

Lastly, one may question whether we upload the loss value directly without privacy-preserving measurements. We clarify what ground truth privacy is regarding the classification task. Different from tasks like target tracking and object segmentation whose labels contain a lot of true content of the original data, labels of the classification tasks are serial numbers or one-hot vectors, which are meaningless without specific corresponding data. Therefore, our method aims at protecting original data-label correspondence instead of the label itself. This is also why we emphasize it is ground truth instead of the label. For example, for a classification system with ``dog'' and ''cat'', it does not matter that setting the ``dog'' is corresponding to the label `0' and the ``cat'' is `1' or inversely.

\begin{table*}[!ht]
    \centering
    \caption{Experimental Configurations}
    \label{tab:conf}
    \begin{tabular}{|l|l|l|l|l|}
    \hline
       \textbf{Dataset}  & MNIST & CIFAR10 & EMNIST & CIFAR100 \\ \hline
        \textbf{Target Model}& LeNet-5 & 6 conv + 2 fc & ResNet18 & ResNet50 \\ \hline
         \textbf{Split Point}  & 3nd cnov & 4th conv & 4th ResBlock & 6th ResBlock \\ \hline
    \end{tabular}
\end{table*}
\vspace{-2mm}
\section{Experimental Evaluation\label{eval}}
\begin{figure*}[htbp]
\centering
\subfigure[MNIST]{
\includegraphics[width=8cm]{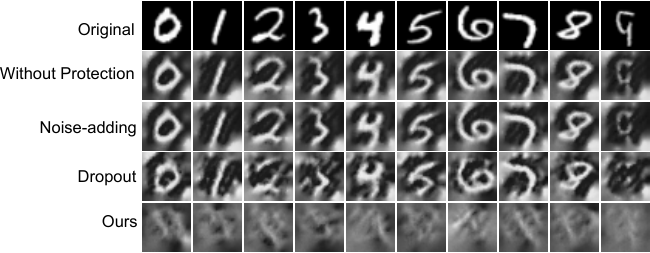}
}
% \quad
\subfigure[CIFAR10]{
\includegraphics[width=8cm]{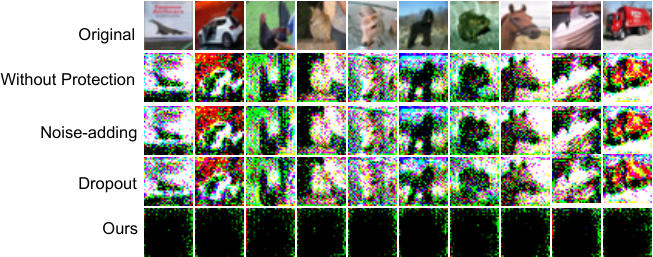}
\vspace{-1.0em}
}
\vspace{-1.0em}
\caption{Examples of visual comparison of images inversed by different methods.}
\label{fig:inver_comp}
\end{figure*}

\begin{table}
\centering
\caption{Quantitative evaluations for image recovery results. MSE and SSIM are measured on 100 testing samples.}
\label{tab:MSE_SSIM}
\begin{tabular}{c|l|c|c|c}
\hline
Dataset  & Method       & Accuracy         & MSE             & SSIM            \\ \hline
         & Baseline     & 99.05\%          & 0.7062          & 0.2742          \\ \cline{2-5} 
         & Noise-adding & 9.34\%           & 0.7175          & 0.0670          \\ \cline{2-5} 
MNIST    & Dropout      & 72.19\%          & 0.7751          & 0.0267          \\ \cline{2-5}
         & Pure local   & \textbf{96.71\%} & $\times$        & $\times$          \\ \cline{2-5} 
         & Ours         & 94.32\%          & \textbf{0.8942} & \textbf{0.0113} \\ \hline
         & Baseline     & 80.40\%          & 0.7532          & 0.4295          \\ \cline{2-5} 
CIFAR10  & Noise-adding & 11.32\%          & 0.7625          & 0.1336           \\ \cline{2-5} 
         & Dropout      & 37.25\%          & 0.8243          & 0.1132         \\ \cline{2-5} 
         & Pure local   & 32.17\%          & $\times$        & $\times$          \\ \cline{2-5}
         & Ours         & \textbf{71.46\%} & \textbf{0.9455} & \textbf{0.1025} \\ \hline
         & Baseline     & 89.19\%          & 0.7045          & 0.2644          \\ \cline{2-5} 
EMNIST   & Noise-adding & 14.35\%          & 0.7412          & 0.0217          \\ \cline{2-5} 
         & Dropout      & 47.22\%          & 0.7956          & 0.0379          \\ \cline{2-5}
         & Pure local   & 31.19\%          & $\times$        & $\times$          \\ \cline{2-5}
         & Ours         & \textbf{82.84\%} & \textbf{0.9546} & \textbf{0.0113} \\ \hline
         & Baseline     & 65.41\%          & 0.7245          & 0.4187          \\ \cline{2-5} 
CIFAR100 & Noise-adding & 9.46\%           & 0.7741          & 0.1787          \\ \cline{2-5} 
         & Dropout      & 26.56\%          & 0.8345          & 0.1154          \\ \cline{2-5} 
         & Pure local   & 6.21\%          & $\times$        & $\times$          \\ \cline{2-5}
         & Ours         & \textbf{57.46\%} & \textbf{0.9287} & \textbf{0.1125} \\ \hline
\end{tabular}
\vspace{-0.8cm}
\end{table}

\subsection{Experiment Setting}
We evaluate the proposed framework on four
standard DNN benchmark datasets: MNIST \cite{MNIST}, CIFAR-10 \cite{CIFAR10}, EMNIST \cite{EMNIST}, and CIFAR-100 \cite{CIFAR100}. MNIST is for handwritten digit recognition and CIFAR-10 is for object classification. EMNIST is an extension of MNIST, which contains 47 classes of handwritten digits and letters. CIFAR-100 is an extension of CIFAR-10, which contains 100 classes of objects. In addition, SVHN \cite{svhn_paper} is used to verify the effectiveness against shadow model attacks when the private dataset with a different domain from the original network training dataset,i.e., we assume the server trains the model on MNIST and a user holds the SVHN dataset. The SVHN dataset is a real-world dataset, taken from Google Street View imagery of house numbers, in a similar style to MNIST. We adopt LeNet \cite{lenet} on the MNIST dataset, and a CNN with 6 convolutional layers and 2 fully connected layers on the CIFAR-10 dataset. Further, we apply deeper models, ResNet18 and ResNet50 \cite{he2016deep}, for EMNIST and CIFAR100, respectively. It is worth noting that a user only owns 1/10 or nearly 1/10 of the total number of the standard datasets, we will introduce the specific setting for each experiment later. Roulette mainly focuses on privacy and accuracy issues, thus we run a CPU/GPU-based numerical emulation instead of deployment in a realistic network environment with hardware implementation. The runtime performance of co-inference deployed on the real-world system has been widely explored and optimized in many existing works, e.g., \cite{CoEdge2, CoEdge3}. To improve real-time performance, Roulette can apply the methods directly. In addition, we traverse all convolution layers as partition points to show the retrained accuracy, local model computation (FLOPs), and tensor size of the intermediate results in Section 5.4, which can be the reference for deployment. In our setting, after partitioning, although the part placed in the device can optimize and change the network structure according to the capacity of the device and communication cost, we first adopt the suggested partition point in \cite{Shredder} and keep the local model structure of the device partition for simplicity. We will then evaluate the impact of the partition point and local model structure. The target models and all attacks are implemented with Pytorch 1.8.1.  We run our experiments on a server with 1 NVIDIA Tesla A100 40G MIG 1g.5gb (the smallest MIG compute instance), and 2 * Intel(R) Xeon(R) Gold 6240 CPU @ 2.60GHz (A total of 36 cores/72 threads).

\begin{figure*}[htbp]
\centering
\subfigure{
\includegraphics[width=4.1cm]{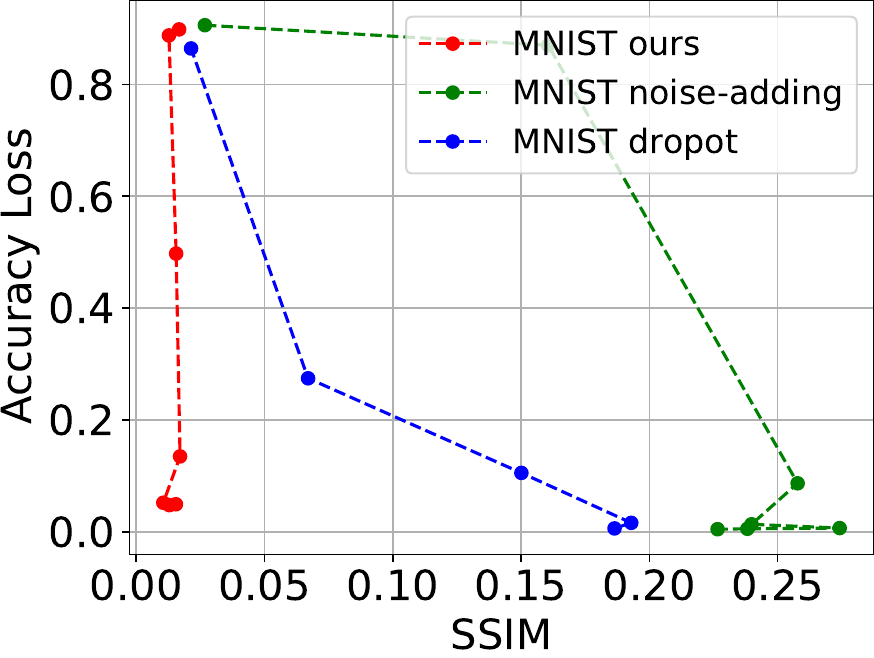}
%\caption{fig1}
}
% \quad
% \hspace{-10mm}
\subfigure{
\includegraphics[width=4.2cm]{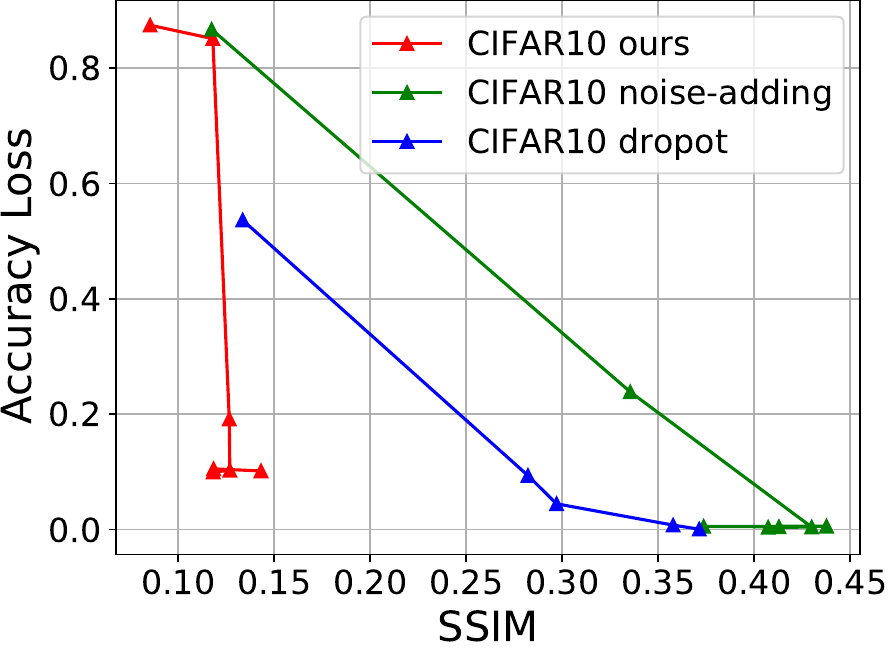}
}
\subfigure{
\includegraphics[width=4.2cm]{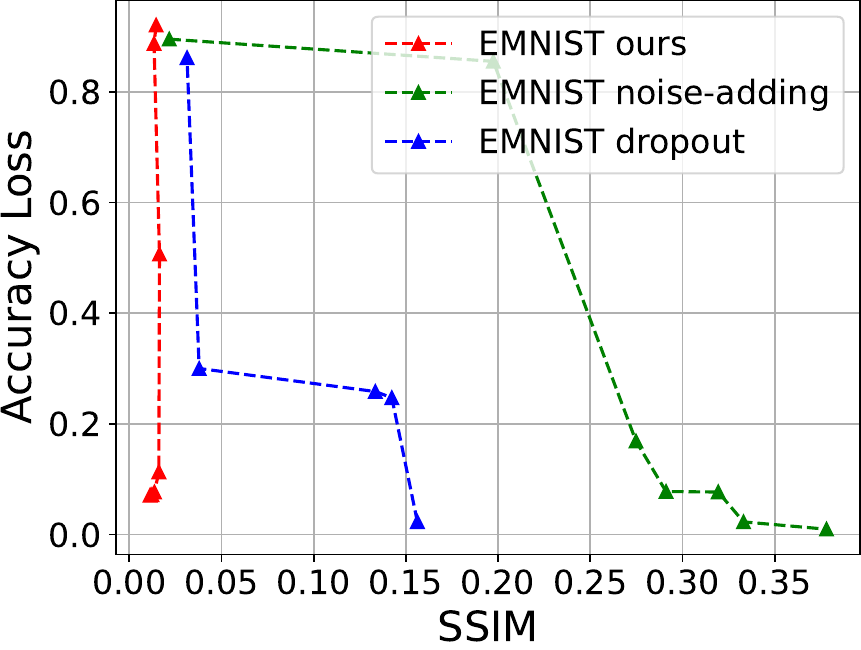}
}
% \quad
\subfigure{
\includegraphics[width=4.1cm]{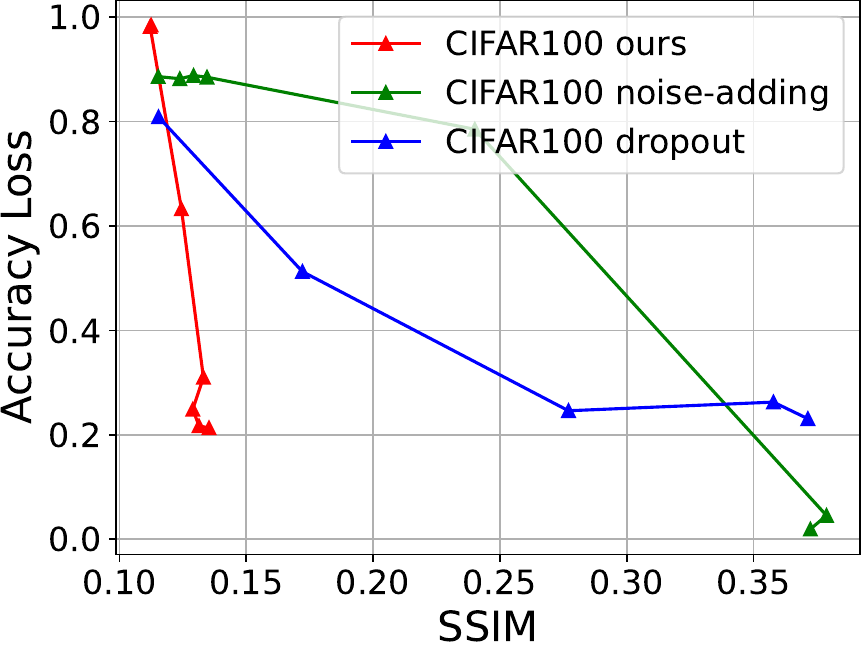}
\vspace{-1.0em}
}
\vspace{-1.0em}
\caption{Model accuracy loss versus the SSIM of inversed images on benchmark datasets.}
\label{fig:acc-metric}
\end{figure*}

% \textbf{metric}
\subsection{Effectiveness of Privacy Protection}
\subsubsection{Model Inversion Attack}
We evaluate the protection ability of the proposed framework against model inversion attack. Model inversion attack, namely, instance recovery attack is proposed in \cite{InversionAttack1}. Given a representation $z = f^{*}(x)$, the attacker tries to find the original input $x$ by solving:
\begin{equation}
    \label{eq:inversion}
    x^{*}=\arg \min_{s} \mathcal{L}(f^{*}(s),z),
    \vspace{-0.2em}
\end{equation}
where $\mathcal{L}$ is the loss function that measures the similarity between $f^{*}(s)$ and $z$, and (\ref{eq:inversion}) can be solved by iterative gradient descent. Note that when analyzing the model inversion attack, we assume that the dataset owned by device and edge server are consistent because the attacker is strongest in this setting \cite{InversionAttack1}. We evaluate the quality of inversion qualitatively and quantitatively through model inversion attack. In Fig. \ref{fig:inver_comp}, we show from a visual perspective the protection against the model inversion attack for some example testing images of the MNIST and CIFAR10 dataset. We show the original images and the reconstructed images derived by performing the attack to intermediate representation. Specifically, images in the first row are the original images. Images in the second and third row are recovered images with pure noise-adding \cite{canDP} and dropout \cite{InversionAttack2} defense, respectively. The fourth rows are the images recovered from the intermediate representation protected by the proposed framework. The proposed framework adds same scale of Laplace noise as the noise-adding method, i.e., $\epsilon=20$ for MNIST and EMNIST, and $\epsilon=100$ for CIFAR10 and CIFAR100, and we fix $\eta=0$ first. The dropout proportion is $10\%$ for MNIST and $5\%$ for CIFAR10. It can be seen in Fig. \ref{fig:inver_comp} that the recovered images with the noise-adding and dropout method are still visually recognizable but it is hard to get any useful information from the images under our proposed framework.

Further, we quantitatively measure the inversion performance by reporting the averaged similarity between 100 pairs of recovered images by the (\ref{eq:inversion}) and their original samples. The metrics we use to measure the similarity between the recovered image and the original image are Mean Squared Error (MSE) and structural similarity index metric (SSIM) \cite{SSIM}. Larger MSE and SSIM values indicate lower and higher similarity between the two images. We summarize the quantitative results under acceptable similarity (almost visually indistinguishable) in Table \ref{tab:MSE_SSIM}. It can be seen that under the acceptable similarity, the accuracy elimination of the noise and dropout methods is too large. Further, our framework integrates both noising-adding and dropout but achieves higher accuracy and lower similarity. This result indicates the superiority of Roulette compared to the naive noise-adding and dropout method. Because of personalized retraining, even integrating both noise adding and dropout for privacy preservation, Roulette achieves higher accuracy. Further, Roulette changes the label mapping relationship, which makes it attacker hard to infer information from the intermediate representation. Also, we consider the training and inference in a pure local manner with local models which have almost the same number of parameters as the front-end model. It turns out that Roulette achieves better model accuracy than the pure local method. 

Then, we plot the comparison between our framework, noise-adding, and dropout method with the accuracy loss as the vertical axis and the similarity index as the horizontal axis under varying the scale of the added noise ($\epsilon=1,5,10,20,50,100,200$) and dropout proportion ($\eta=0.05, 0.10, 0.15, 0.20$), which is shown in Fig. \ref{fig:acc-metric}. It can be seen that the pure noise-adding and pure dropout method without retraining will pay a huge price of privacy sacrifice when pursuing smaller model accuracy loss, but the proposed framework is insensitive to the noise scale defending against model inversion attack.

Both qualitative and quantitative results indicate that the proposed framework has a good defense against model inversion attack while taking into account that the accuracy loss is not too large.

\subsubsection{Gradient Inversion Attack}
Then, we try to perform the SOTA gradient-based reconstruction attack against federated learning, i.e., GradInversion \cite{GradInversion} on Roulette. Specifically, we use the same model and partition strategy in the previous and launch the attack on the gradients of the feature map. We assume a very strong attacker model which can get access to the exact model initial parameters of the local model. Also, we set the batch size is 1 and the attack is launched in the first round, which is the strongest attacker setting \cite{infocom23inver}. Even under this strong threat model, it turns out that Roulette defends the gradient inversion attack well. Two recovered examples are shown as follows in Fig. \ref{fig:g_re}.

\begin{figure}[ht]
    \centering
    \includegraphics{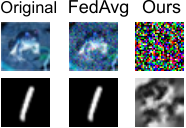}
    \caption{Reconstruction results of GradInversion}
    \label{fig:g_re}
\end{figure}

Indeed, attacks aiming at shared parameters (gradient) against federated learning can not reasonably be expected to have an effect on Roulette. Recall that Roulette divides the network into two parts, i.e., the local part (front-end) and the pre-trained remote part (back-end).  During both the training and inference, the things that users interact with the server are the intermediate representation (feature map), the gradients of the feature map, and the loss value. Roulette does not need to upload any local part parameters to the server. Also, the pre-trained remote part is trained using public datasets and is always unchanged as a deterministic function, which means the remote part parameters do not carry any information on local data. For the feature map, we have performed the reconstruction attack (model reversion attack) and showed that Roulette defenses the attack well in Section 5.2-Fig 4. The gradients of the feature map, compared to the feature map, go through much more non-linear computation and lost more information on the local data. The more non-linear computation, the more non-linear computation, the greater the difficulty of the inversion attack \cite{InversionAttack1}. Roulette also adds noise and rebuilds the data-label correspondence, making it harder for the reconstruction attack to be effective. 

\subsubsection{Shadow Model Attack}
In this subsection, we evaluate the more direct attack aiming at ground truth privacy. Targeting ground truth privacy, we propose shadow model attack which is to train a substitute model to imitate the real model and then perform attacks based on shadow model, e.g., inversion attack, or property inference attack. Specifically, the attacker trains a set of possible model $\mathcal{F^\prime}=\{f_{1}^{\prime},f_{2}^{\prime},...,f_{|\mathcal{S}|}^{\prime} \}$ corresponding to $|\mathcal{S}|$ possible label mappings and then, based on gradient descent, trains a discrimination network $D_\gamma$ by solving:
\begin{equation}
\gamma^{*}=\arg \min_{\gamma} \sum_{j}^{|\mathcal{S}|} \sum_{i}^{M} \mathcal{L}_{CE}(D_\gamma(f_{j}^\prime(x_i)), j),
\end{equation}
where $\mathcal{L}_{CE}$ is the cross entropy loss. Then, $D_\gamma$ is assigned to judge which mapping the device is using. We have analyzed that when $N$ is relatively large, shadow model attack is untraceable in the previous section. In this section, the worst-case empirical analysis is given. We evaluate the performance of the proposed system against shadow model attack on three-class classification system such that it is a binary classification problem for the attacker. We do not redesign a new classification system but re-label the original 10-class datasets to maintain the same feature extraction ability \cite{Shredder}. Specifically, for MNIST, we relabel the data as "$y\leq3$", "$3< y\leq 6$" and "$y>6$". For CIFAR10, we relabel the data as "mammal" (cat, dog, horse, deer), "non-mammal" (bird, frog), and "vehicle" (airplane, automobile, ship, truck). 

We have theoretically shown that an attacker is hard (exponential difficulty level) to launch a discrimination attack when the number of classes goes slightly. This subsection is a nearly worst-case analysis, e.g., three-class classification problems (We re-label the MNIST and CIFAR-10 as number range classification problem and mammal/non-mammal/vehicle classification problem) for the attacker without knowledge of model parameters but with data distribution. Further, we set a very strong attacker that if it figures out whether the user uses a specific key can be regarded as a successful attack. Then, we vary the distribution difference (non-i.i.d. degree) between the user's and the server's datasets to evaluate the attack performance. Preserving privacy thoroughly  without additional overhead is difficult, thus the experiment in this subsection is designed to explore the edge of the defense ability Roulette can provide. 

There are two major non-i.i.d issues according to \cite{non-iid}, e.g. label skew and attribute distribution skew. The first one means that the label distribution between the server and the user is different. The second one indicates that under the same label, the i.i.d. datasets share a similar style to each sample, and the non-i.i.d datasets are not and may introduce more interference, noise, and rotation. Hence, in this subsection, we consider three different local data distributions, which are almost the same as the original training dataset, non-iid label datasets, and datasets belonging to different domains (non-iid feature datasets). We define $\alpha$ to characterize the degree of data heterogeneity in the non-iid label data. Regard the device and the edge server as two agents. We uniformly randomly allocate samples with label $y$ to both agents whose last digits of their indices are $y$. For each sample allocated to agents, with a probability of $1-\alpha$, we allocate this sample to a random agent with equal probabilities, and with a probability of $\alpha$, the sample is allocated to the edge server; a larger $\alpha$ leads to greater data heterogeneity. The two datasets are i.i.d. if and only if $\alpha=0$.

First, we set $\epsilon=15,\ \eta=0.1$ for MNIST, $\epsilon=120,\ \eta=0.05$ for CIFAR10 and vary $\alpha$ ($0.2, 0.5, 0.8$). The evaluation results are shown in Fig. \ref{fig:LabHe}. The original accuracy refers to the test accuracy of the local dataset without retraining, new accuracy is the test accuracy with retrained local partition based on local data, and attack accuracy $\Lambda$ is defined as:
\begin{equation}
    \Lambda= 1- \frac{1}{M^{\prime}}\sum_{i}^{M^{\prime}}[ idx(\phi)\oplus D_\gamma(x_i)],
\end{equation}
where $idx(\phi)\in\{0,1,..., |\mathcal{S}|\}$ is the index of the true mapping corresponding to the shadow model in $\mathcal{F}^{\prime}$, $\oplus$ is XOR operation, and $M^{\prime}$ is test dataset size. It can be seen from Fig. \ref{fig:LabHe}, when the local data distribution is different from the original training dataset, our proposed framework can not only protect privacy, but also improve the inference accuracy. Meanwhile, the higher the degree of non-iid, the better the privacy protection effect of the proposed framework. When $\alpha = 0.8$, the probability of successful attack is almost equal to random guess. On the other hand, if the degree of non-iid between two agents' dataset is low, the proposed method has a high risk of ground truth privacy disclosure. Note that for the attack accuracy, random guess value is 0.5, and for the original and retained accuracy, random guess value is nearly 0.33.
\begin{figure}[htbp]
\centering
\subfigure[MNIST]{
\includegraphics[width=4cm]{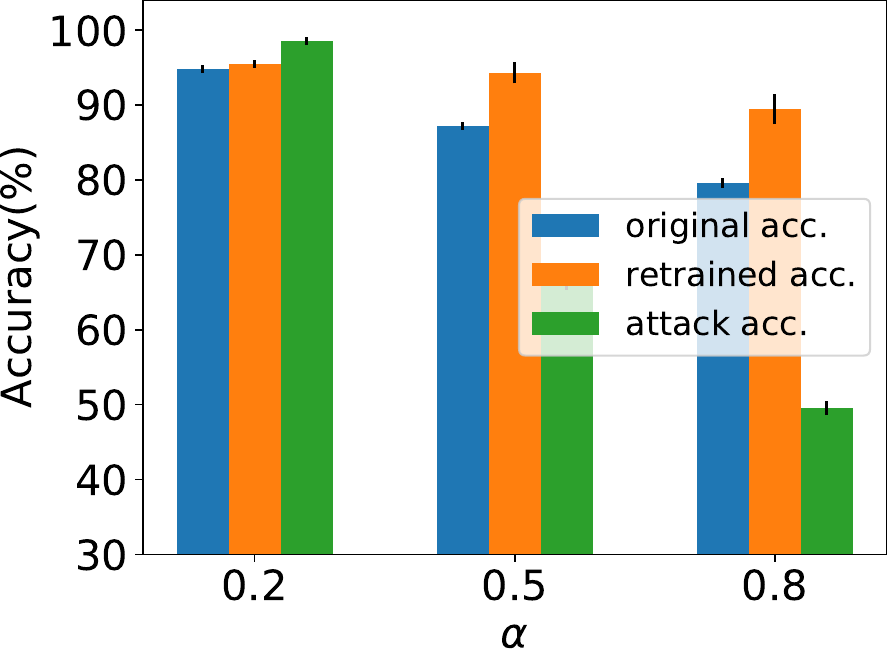}
%\caption{fig1}
}
% \quad
% \hspace{-10mm}
\subfigure[CIFAR10]{
\includegraphics[width=4cm]{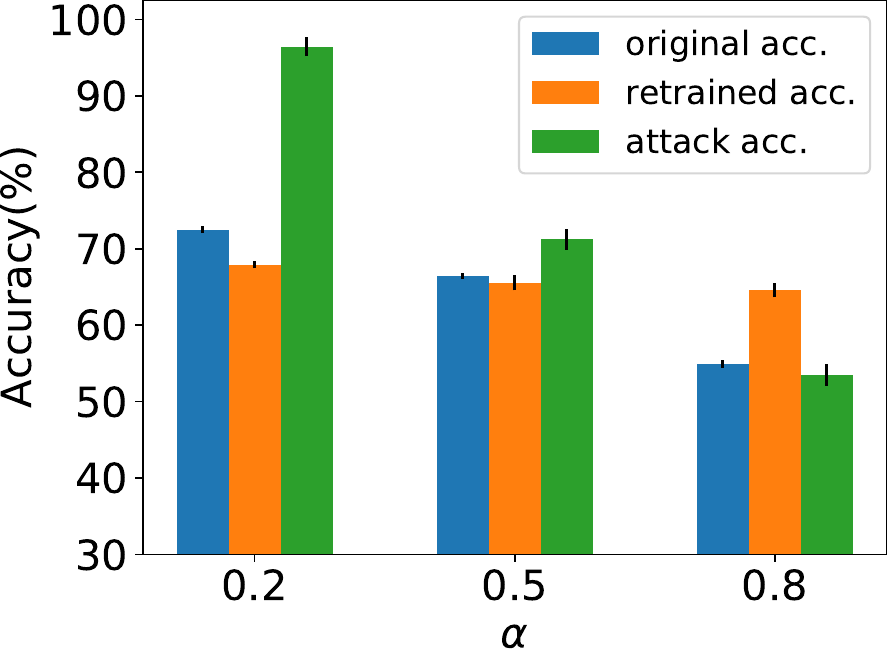}
}
\vspace{-1.0em}
\caption{Original, retrained and attack accuracy versus $\alpha$ on MNIST (a) and CIFAR10 (b).}
\label{fig:LabHe}
\vspace{-1.0em}
\end{figure}

Further, we evaluate the performance of the proposed framework when the domain of the local dataset is different from the original training dataset. Specifically, the original dataset is MNIST, and we retrain the local partition on the SVHN. To simulate the limited data volume of the device, we set the retrain dataset to be uniformly sampled at 1/10 of the entire SVHN. The visual comparison of data samples between MNIST and SVHN is shown in Fig. \ref{fig:MSComp}. It can be seen that the feature distribution of MNIST and SVHN is different but both datasets are digit numbers. In the experiment, the original RGB images in SVHN are converted into gray images. The average original, retrained and attack accuracy are 0.4428, 0.7683, and 0.4922, respectively, which means that when the domain difference is large, the model accuracy eliminates seriously, and both privacy and model accuracy are improved via the proposed framework. 
\begin{figure}[htbp]
\centering
\subfigure[MNIST]{
\includegraphics[width=4cm]{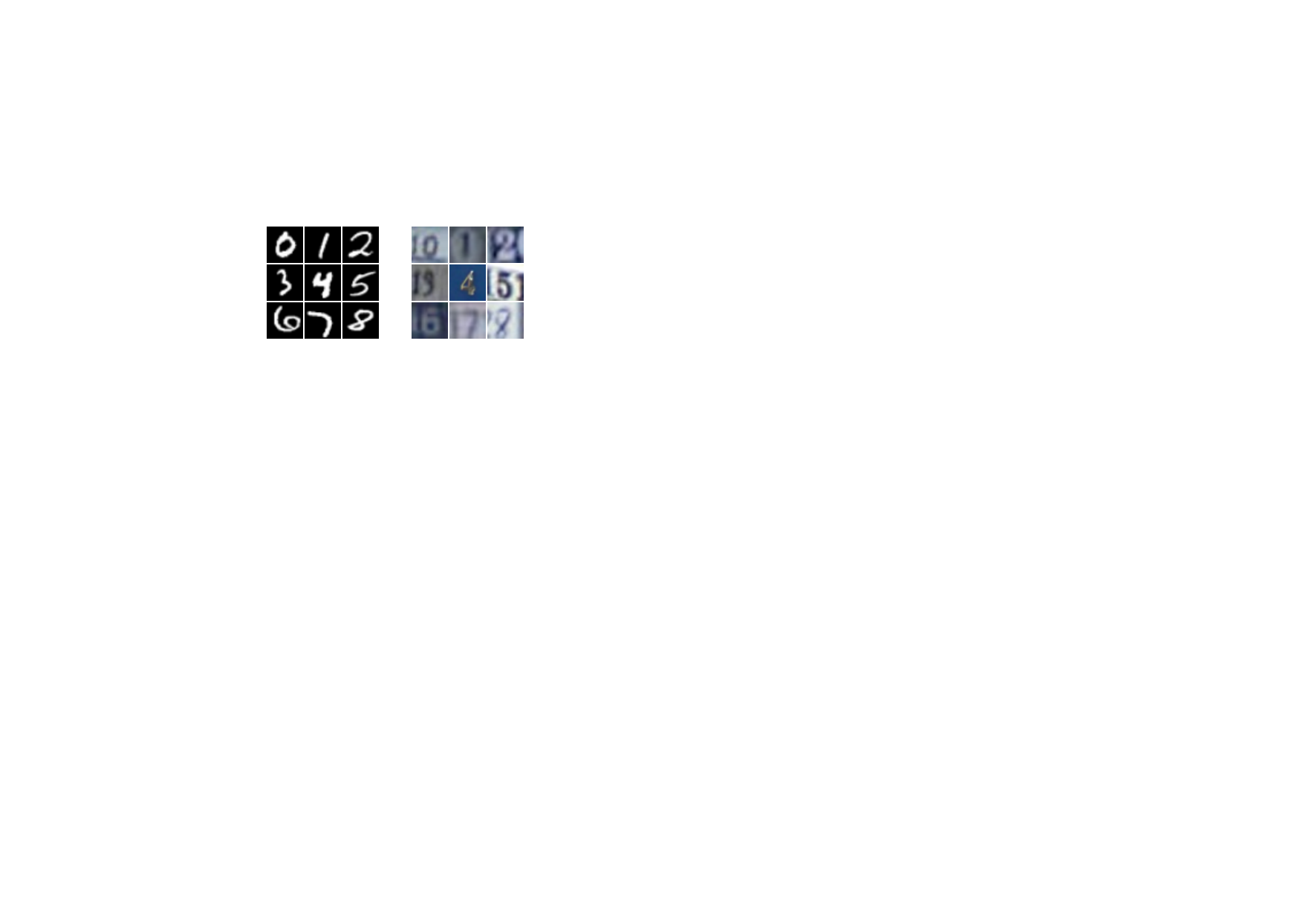}
%\caption{fig1}
}
% \quad
% \hspace{-10mm}
\subfigure[SVHN]{
\includegraphics[width=4cm]{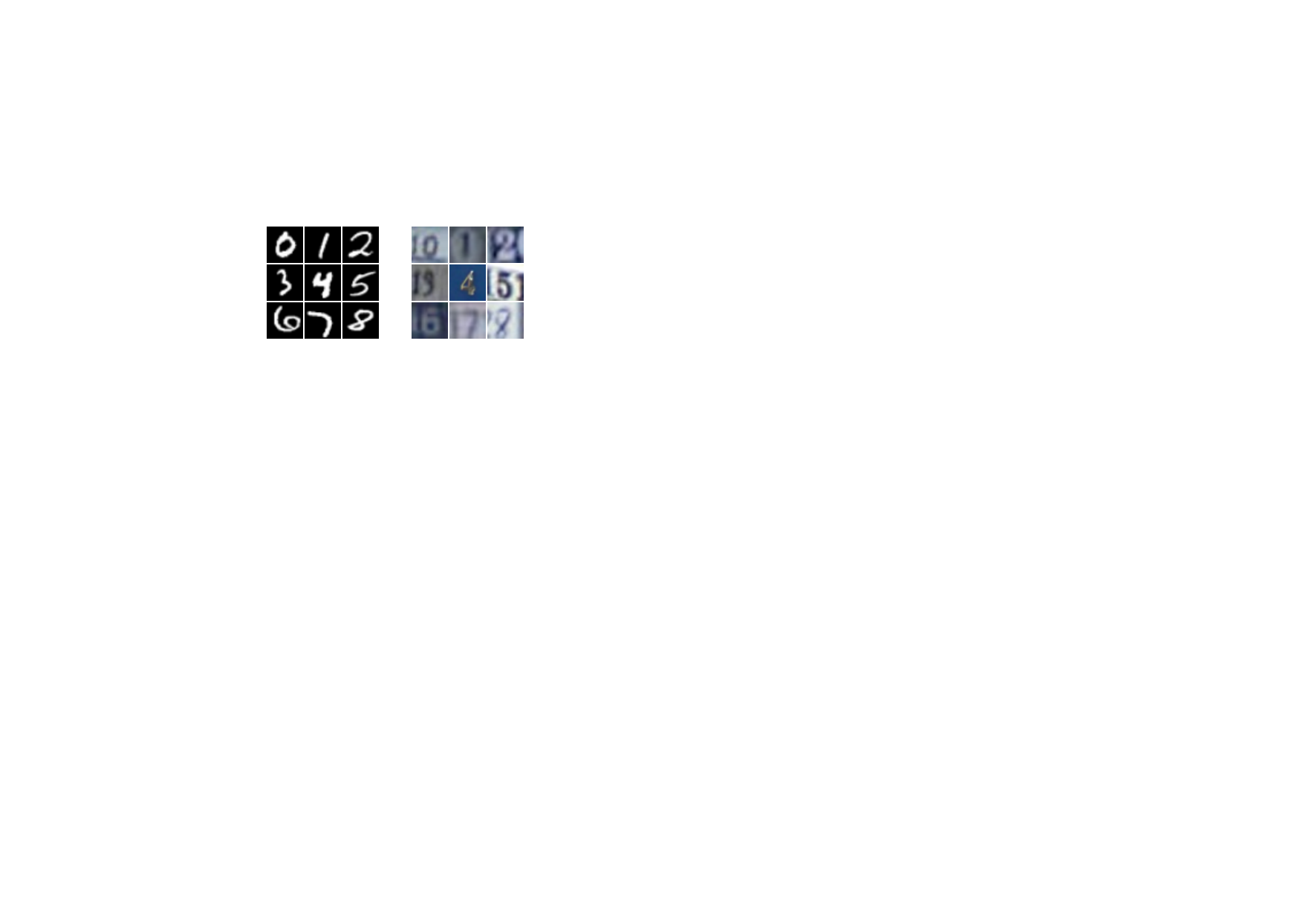}
}
\vspace{-1.0em}
\caption{Comparison of MNIST (a) and SVHN (b) dataset.}
\label{fig:MSComp}
\vspace{-1.0em}
\end{figure}

\begin{figure}[htbp]
\centering
\subfigure[Attack acc. (MNIST)]{
\includegraphics[width=4cm]{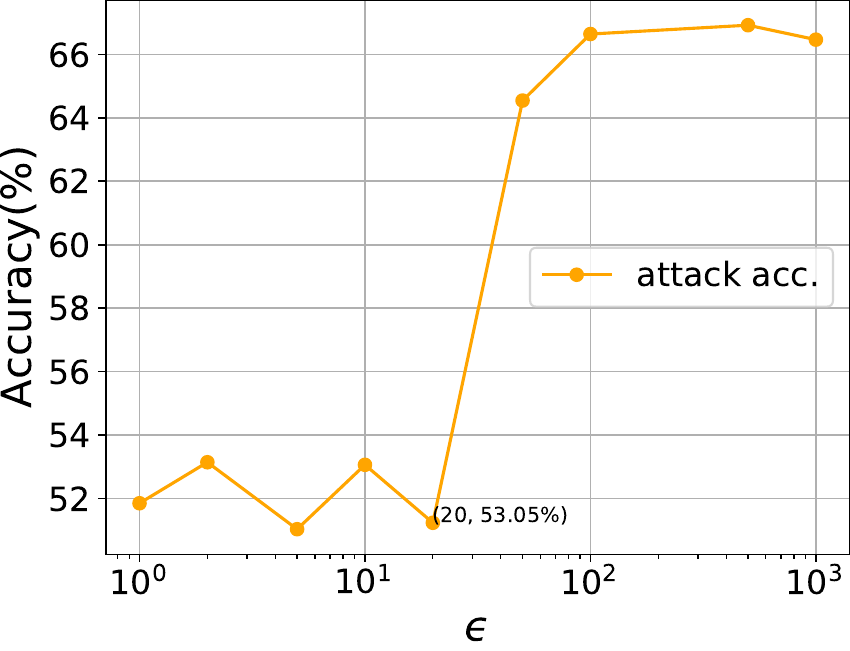}
%\caption{fig1}
}
\subfigure[Retrained acc. (MNIST)]{
\includegraphics[width=4cm]{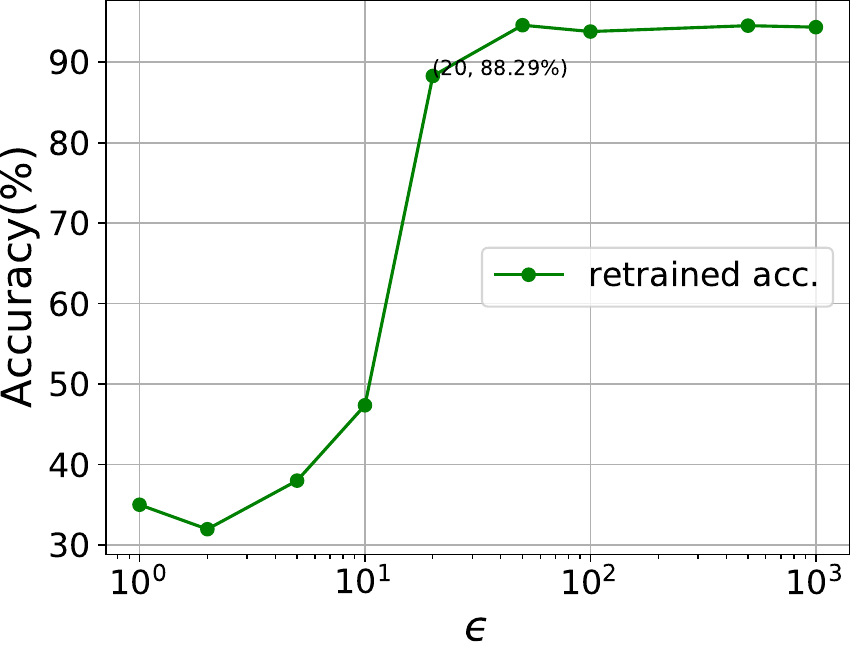}
%\caption{fig1}
}
% \quad
% \hspace{-10mm}
\subfigure[Attack acc. (CIFAR10)]{
\includegraphics[width=4cm]{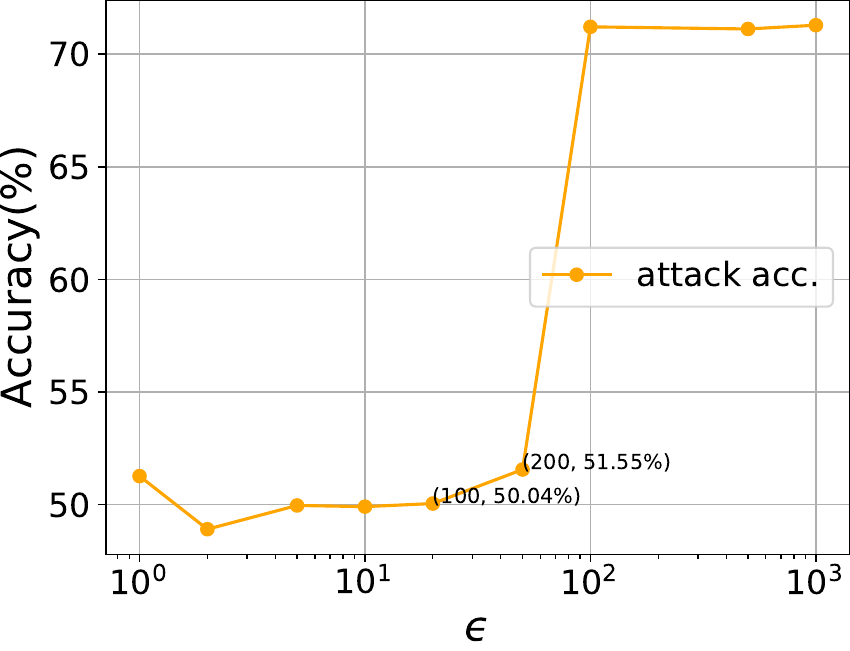}
}
\subfigure[Retrained acc. (CIFAR10)]{
\includegraphics[width=4cm]{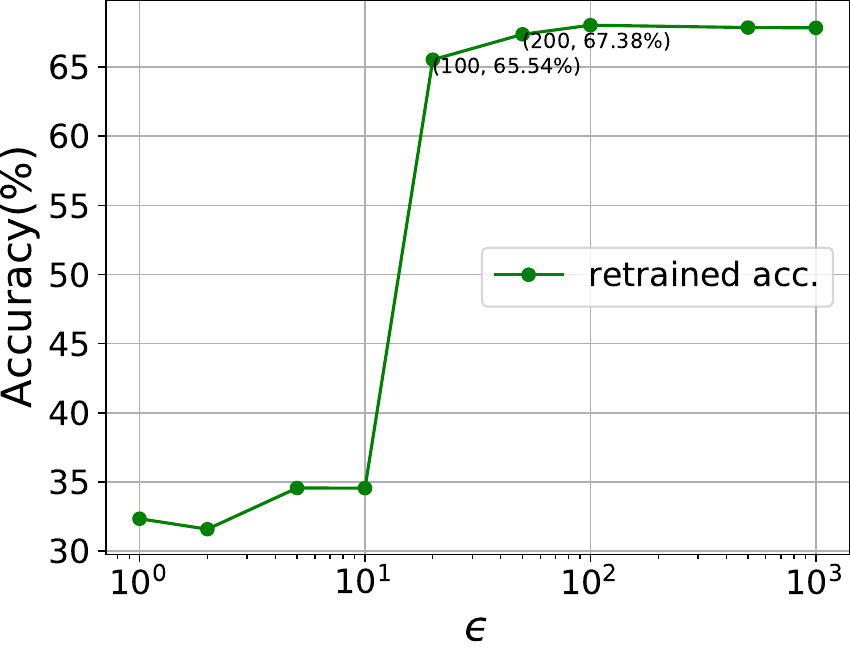}
}
\vspace{-1.0em}
\caption{Attack and retrained accuracy versus $\epsilon$.}
\label{fig:epsilon}
\vspace{-1.0em}
\end{figure}

\subsection{Impact of Differential Privacy  Parameters}
We first analyze the impact of differential privacy budget parameter $\epsilon$. In this section, we still take the three-class classification systems in the last section as the research object. We fix $\alpha = 0.5,\ \eta=0$ and vary $\epsilon$ from 1 to 1000 to evaluate the shadow model attack accuracy and retrained accuracy. As depicted in Fig. \ref{fig:epsilon}, for MNIST, the DNN under the Laplace mechanism has essentially no utility for $\epsilon < 20 $. However, when $\epsilon\geq 20$, the accuracy is rapidly getting close to the noise-free accuracy. For CIFAR10, the threshold is about 100. Then, it can be seen that for MNIST, when $\epsilon = 20$, the retraining accuracy reaches 0.8829, while the attack accuracy is only 0.5123. For CIFAR10, when $\epsilon = (100,200)$, the retraining accuracy reaches 0.6554, 0.6738, while the attack accuracy is only 0.5004, 0.5155. This shows that the proposed framework has sweet spots of noise scale choosing where the attack can be defended meanwhile model utility is maintained. We also notice that even the attack accuracy becomes big and stable when $\epsilon$ greater than the threshold, but the value is still relatively small (only 10\%+ than random guess). However, generally, high values of $\epsilon$ ($>20$) may still provide meaningless differential privacy \cite{IsDP}, which means that original Laplace mechanism is not suitable to directly use but we have the Theorem \ref{thm:DPbudget} to reduce the value.

Then, we analyze the total differential privacy budget based on the Theorem \ref{thm:DPbudget}, which is controlled by two privacy related parameters, i.e, $\epsilon$ and $\eta$. Here, we fix the value of $\eta$ as 0.1 for MNIST and 0.05 for CIFAR10, and change the privacy budget $\varepsilon$ by varying $\epsilon$ 1 to 1000. We set the noise-adding layer is after the first conv. layer. Fig. \ref{TotBudget}(a) shows that the proposed framework can maintains the accuracy at a high value for a wide range of privacy budgets. The results also shows that the nullification operation plays an important role. Compared to $\epsilon$, $\varepsilon$ is obliviously small. These results argue that the proposed framework is applicable to different privacy requirements. Member inference attacks can be defended when privacy budgets are tight \cite{membershipdf}. 

The result also indicate that it can effectively improve the performance even when the privacy budget is much tighter compared to the method using noise-adding and dropout under same hyper-parameters but without retraining, which is shown in Fig. \ref{TotBudget} (b). It is obvious that the threshold budget of getting close to original accuracy for the naive noise-adding is larger than the proposed method. This improvement is brought by local retraining.

\begin{figure}[htbp]
\centering
\subfigure[Our framework]{
\includegraphics[width=4cm]{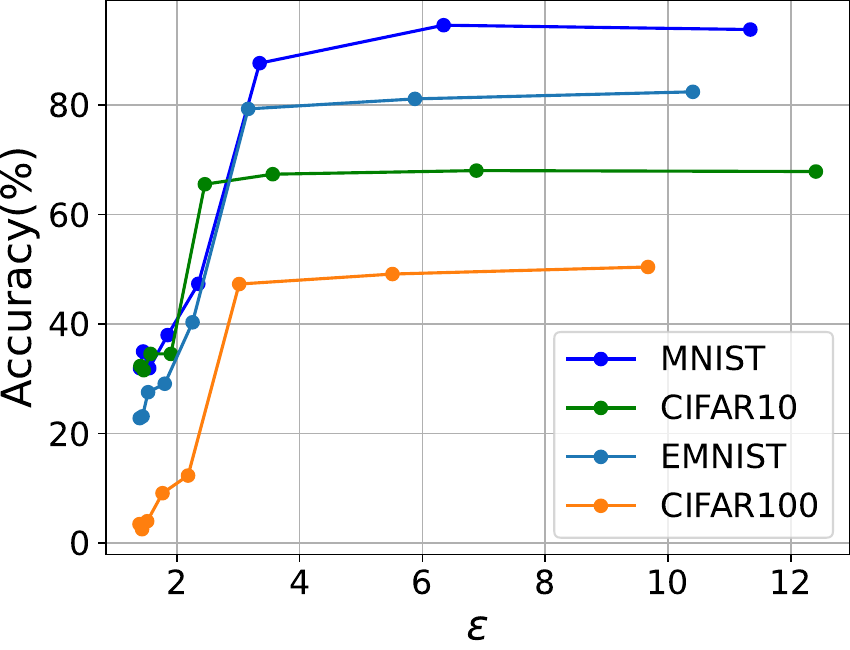}
%\caption{fig1}
}
\subfigure[Without Retraining]{
\includegraphics[width=4cm]{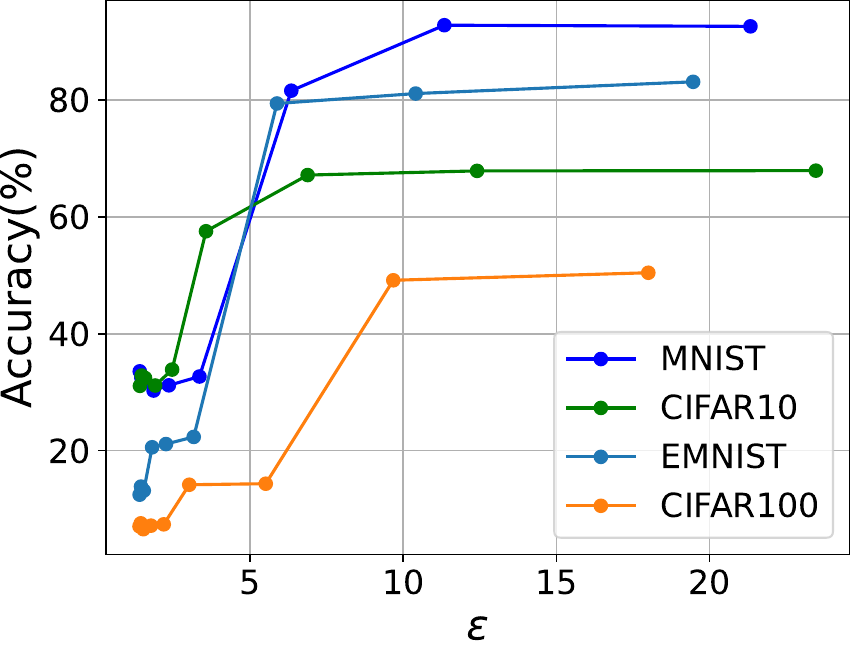}
}
\vspace{-1.0em}
\caption{Model accuracy versus $\varepsilon$.}
\label{TotBudget}
\vspace{-1.0em}
\end{figure}

\subsection{Impact of Local Model Structure and Partition Point}
The proposed framework allows changing the local model structure at the device, which can avoid the problem of early partitioning when the partition point is at the early layer with insufficient model capacity that eliminates the inference accuracy. However, the price of using a larger structure is that the local device with limited resources needs to undertake more computing, resulting in more computing latency. Therefore, in this subsection, we explore how to choose a partition point and design local network structure empirically. To do this, we traverse the potential partition points of the model and increase the number of layers of the local model to observe the inference accuracy. The computing loads of different local models and intermediate tensor size are shown in Table \ref{tab:3}, which are the indicators of local computing delay and communication delay with the edge server, respectively. Note that in order to keep the output shape of the local model unchanged, the convolution kernel size of the layers we added is $3\times3$ and the padding is 1 in the experiment. 

We use approximate thresholds to delineate an acceptable range in Table III. For LeNet, we regard FLOPs less than 40k, accuracy greater than 0.9, and tensor size less than 500 as acceptable values. For CIFAR10, we regard FLOPs less than 100k, accuracy greater than 0.6, and tensor size less than 20k as acceptable values. For EMNIST, we regard FLOPs less than 3M, accuracy greater than 0.9, and tensor size less than 10k as acceptable values. For CIFAR100, we regard FLOPs less than 1.5M, accuracy greater than 0.45, and tensor size less than 10k as acceptable values. Note that the approximate thresholds are empirically selected. Specific selection can refer to the method in \cite{CoEdge2}. The split point corresponding to the row with all three values grayed out is recommended. It can be seen that when the partition point is too early, the size of the intermediate representation is often large, and adding new layers is very inefficient for accuracy improvement. In general, choosing deeper partition points and modifying fewer structures are more effective than the opposite. In practice, we can choose a proper partition point that is deep enough but also matches the computing capability of the hosted devices.

\begin{table}
\centering
\caption{Comparison of different local structures: cells marked gray mean that the data is acceptable.}
\vspace{-1.0em}
\subtable[LeNet for MNIST]{
\begin{tabular}{cccc}
\cline{1-4}
\multicolumn{1}{c|}{Partition Point}      & \multicolumn{1}{c|}{FLOPs (k)}                       & \multicolumn{1}{c|}{Accuracy}                       & Tensor Size             \\ \cline{1-4}
\multicolumn{1}{c|}{}      & \multicolumn{1}{c|}{\cellcolor[HTML]{EFEFEF}12.70} & \multicolumn{1}{c|}{0.4723}                         &                              \\
\multicolumn{1}{c|}{1conv} & \multicolumn{1}{c|}{\cellcolor[HTML]{EFEFEF}35.41} & \multicolumn{1}{c|}{0.7328}                         & 1176                          \\
\multicolumn{1}{c|}{}      & \multicolumn{1}{c|}{543.62}                         & \multicolumn{1}{c|}{\cellcolor[HTML]{EFEFEF}0.9578} &                              \\ \cline{1-4}
\multicolumn{1}{c|}{2conv} & \multicolumn{1}{c|}{\cellcolor[HTML]{EFEFEF}37.65} & \multicolumn{1}{c|}{\cellcolor[HTML]{EFEFEF}0.9425} & \cellcolor[HTML]{EFEFEF}400  \\ \cline{1-4}
\multicolumn{1}{c|}{3conv} & \multicolumn{1}{c|}{42.48}                         & \multicolumn{1}{c|}{\cellcolor[HTML]{EFEFEF}0.9622} & \cellcolor[HTML]{EFEFEF}120 \\ \cline{1-4}
\end{tabular}
}
\quad
\subtable[CIFAR10CNN]{
\begin{tabular}{c|c|c|c}
\hline
Partition Point & FLOPs (k)                     & Accuracy                       & Tensor size                   \\ \hline
                & \cellcolor[HTML]{EFEFEF} 1.9                           & 0.2548                         &                               \\
1 conv          & \cellcolor[HTML]{EFEFEF}39.78                         & 0.4753                          & 65536                         \\
                & \cellcolor[HTML]{EFEFEF}77.66                         & 0.5547                        &                               \\
                & 115.54                        & 0.5722                         &                               \\ \hline
                & 39.85                         & 0.4986                         & \cellcolor[HTML]{FFFFFF}      \\
2 conv          & \cellcolor[HTML]{EFEFEF}49.32 & \cellcolor[HTML]{EFEFEF}0.6423 & \cellcolor[HTML]{EFEFEF}16384 \\ \hline
3 conv          & \cellcolor[HTML]{EFEFEF}58.79 & \cellcolor[HTML]{EFEFEF}0.7016 & 32768                         \\ \hline
4 conv          & \cellcolor[HTML]{EFEFEF}96.63 & \cellcolor[HTML]{EFEFEF}0.7261 & \cellcolor[HTML]{EFEFEF}8192  \\ \hline
5 conv          & 106.09                        & \cellcolor[HTML]{EFEFEF}0.7584 & \cellcolor[HTML]{EFEFEF}8192  \\ \hline
6 conv          & 115.54                        & \cellcolor[HTML]{EFEFEF}0.7982 & \cellcolor[HTML]{EFEFEF}2048  \\ \hline
\end{tabular}
}
\subtable[ResNet18 for EMNIST]{
\begin{tabular}{c|c|c|c}
\hline
\rowcolor[HTML]{FFFFFF} 
Partition point                    & FLOPs (k)                      & Accuracy                 & Tensor size                  \\ \hline
\rowcolor[HTML]{FFFFFF} 
2 ResBlock                         & \cellcolor[HTML]{EFEFEF}149.82 & 0.6345                   & 65536                        \\ \hline
\rowcolor[HTML]{EFEFEF} 
\cellcolor[HTML]{FFFFFF}4 ResBlock & 675.39                         & 0.8284                   & 32768                        \\ \hline
\rowcolor[HTML]{EFEFEF} 
\cellcolor[HTML]{FFFFFF}6 ResBlock & 2775.10                        & 0.8842                   & 16384                        \\ \hline
\rowcolor[HTML]{FFFFFF} 
8 ResBlock                         & 11168.83                       & 0.8991                   & \cellcolor[HTML]{EFEFEF}8192 \\ \hline
\end{tabular}
}
\subtable[ResNet50 for CIFAR100]{
\begin{tabular}{c|c|c|c}
\hline
Partition point & FLOPs(k)                         & Accuracy                       & Tensor size                  \\ \hline
ResBlock1       & \cellcolor[HTML]{EFEFEF}84.54   & 0.0456                         & 16384                        \\ \hline
ResBlock2       & \cellcolor[HTML]{EFEFEF}154.94   & 0.0873                         & 16384                        \\ \hline
ResBlock3       & \cellcolor[HTML]{EFEFEF}225.34  & 0.1963                         & 16384                        \\ \hline
ResBlock4       & \cellcolor[HTML]{EFEFEF}604.74  & \cellcolor[HTML]{EFEFEF}0.4599 & \cellcolor[HTML]{EFEFEF}8192 \\ \hline
ResBlock5       & \cellcolor[HTML]{EFEFEF}884.80    & \cellcolor[HTML]{EFEFEF}0.4631 & \cellcolor[HTML]{EFEFEF}8192 \\ \hline
ResBlock6       & \cellcolor[HTML]{EFEFEF}1164.86 & \cellcolor[HTML]{EFEFEF}0.5746 & \cellcolor[HTML]{EFEFEF}8192 \\ \hline
ResBlock7       & \cellcolor[HTML]{EFEFEF}1444.93 & \cellcolor[HTML]{EFEFEF}0.5846 & \cellcolor[HTML]{EFEFEF}8192 \\ \hline
ResBlock8       & 2957.376                         & \cellcolor[HTML]{EFEFEF}0.6213 & \cellcolor[HTML]{EFEFEF}4096 \\ \hline
ResBlock10      & 5191.744                         & \cellcolor[HTML]{EFEFEF}0.6344 & \cellcolor[HTML]{EFEFEF}4096 \\ \hline
ResBlock12      & 7426.112                         & \cellcolor[HTML]{EFEFEF}0.6347 & \cellcolor[HTML]{EFEFEF}4096 \\ \hline
ResBlock13      & 8543.296                         & \cellcolor[HTML]{EFEFEF}0.6412 & \cellcolor[HTML]{EFEFEF}4096 \\ \hline
ResBlock12      & 14582.848                        & 0.6408                         & \cellcolor[HTML]{EFEFEF}2048 \\ \hline
\end{tabular}
}
\vspace{-0.5cm}
\label{tab:3}
\end{table}

% \subsection{Evaluation on multiple devices}
% number
% -
% (latency, memory)
% cloud-only
% device-only
% naive co-inference
% ours

% \begin{equation}
%      \min_{\epsilon_1,\epsilon_2,p,S} \epsilon_1+\ln{\frac{\epsilon_2}{{\lVert \nabla_{x_r}h^{*} \rVert}_\infty}}+\frac{1}{p}\\
% \end{equation}

% \input{dissc}
\section{Related Works\label{rela}}

In this section, we review the related works in the field of co-inference and related privacy protection solutions. These privacy protection solutions can be divided into three main categories: noise injection, encoding-based solutions, and cryptographic solutions, and each category is reviewed in the following subsection.

\subsection{Collaborative Inference} As a prominent solution for fast edge inference, collaborative inference (co-inference) has been widely studied. The basic idea is to partition DNN into multiple partitions, with each part allocated to different participants. The partition system is designed to choose the optimal partition points according to computation, communication, and energy constraints in \cite{CoEdge3,CoEdge4}. More recently, to orchestrate the workload among a cluster of heterogeneous devices, CoEdge \cite{CoEdge} adaptively partitions the input images tailored to the diverse computing capacities and dynamic network conditions. Furthermore, Autodidactic Neurosurgeon \cite{CoEdge2} applies online learning to predict uncertain remote time. These works make their effort to improve real-time performance but He et al. \cite{InversionAttack2} point out that co-inference may lead to privacy leakage and suggests that the researchers in this area pay more attention to privacy issues instead of considering real-time performance only \cite{InversionAttack1,InversionAttack2}. And then, analyzing the relation between privacy leakage and depth of the DNN, a Lyapunov optimization framework is proposed to choose partition point considering both real-time performance and privacy \cite{LavPri}. However, it only measures the attenuation of information related to the primary task at different partition points without any privacy enhancement. 

\subsection{Semantic Communication} Semantic and task-oriented communication is a hot topic in the communication research community. The core idea of semantic or task-oriented communication is to build a next-generation communication system by considering the "meanings" of sent information at a transmitter instead of regarding each bit equal \cite{SC1,SC2}. In recent works, split learning and co-inference are widely applied in semantic communication systems. J. Shao et al. designed a task-oriented communication system, where the DNN deployed on the device plays a role of both feature extractor and joint source-channel coding encoder \cite{spl-co1}. Z. Weng et al. proposed semantic communication systems for speech \cite{spl-co2}. In \cite{spl-co2}, the transmitter and receiver are composed of semantic and channel encoder (decoder). Both encoder and decoder are in the form of DNN, and the training procedure follows the split learning mode. However, in these works, they do not take the privacy issues into account. For more details about semantic and task-oriented communications, we refer the reader to the surveys \cite{SC1} and \cite{SC2}.

\subsection{Noise Injection} Noise-based privacy protection is widely studied in data collection and model training but only a few involve inference. An obvious shortcoming of noise injection is that it always imposes an abstruse trade-off between privacy and utility. To find a better balance between utility and privacy, Mireshghallah et al. propose Shredder \cite{Shredder}, a device-edge co-inference system, in which noise added to the IR is drawn from learned noise distribution instead of random noise. Furthermore, based on Differential Privacy (DP), Cloak \cite{Cloak} treats the pre-trained network on the cloud as a deterministic function, and designs a Laplace mechanism that satisfies Local $\epsilon$-DP to add noise to raw data for further inference. However, the inference result of the primary mask is exposed to the cloud or edge server directly.

\subsection{Privacy-preserving Federated Learning} 
Federated Learning (FL) has emerged as a promising approach to enable collaborative machine learning on decentralized data sources while preserving data privacy \cite{Fed1}. FL leverages a decentralized architecture where multiple devices or nodes collectively train a shared model without sharing their raw local data. In recent years, several works leverage some features, or even the raw local data can be recovered from the gradients and model parameters during multi-party interaction \cite{FLprivacy1, FLprivacy2}. Privacy-preserving federated learning (PPFL) focuses on enhancing the privacy guarantees within the FL framework. It employs various techniques to protect the confidentiality of sensitive data during the training process \cite{SMCFL, HEFL}. FL with submodel design may be an alternative to Roulette, e.g., \cite{comment_1, comment_2}. However, it does not provide rigorous privacy and security analysis. Also, some submodel designs can not be deployed directly on the general classification problems. For instance, the system proposed in \cite{comment_1} can only be applied in a very special scenario where the recommendation system model for each client can be naturally decoupled.

\subsection{Encoding-based Solutions} 
Encoding-based solutions train a local encoder and remote decoder for the device and the server separately. These solutions can be divided into privacy-preserving latent representation learning and instance encoding. The most widely studied privacy-preserving latent representation learning is adversarial training in which given privacy attribute labels, they perform a min-max game between defender and adversary to learn a privacy-preserving representation \cite{DPFB,GANdefense1,GANdegense2}. Furthermore, Li et al. proposed TIPRDC \cite{IntermRep1}, a task-independent framework, in which the encoder can generate a privacy-preserving representation without the knowledge of privacy attribute labels. MixCon \cite{MixCon} study the relationship between model inversion attack and separability of representation and develop consistency loss adjusting the separability of the representation to balance data utility and hardness of inversion attack. These approaches can not protect the information of the primary task. Complex Valued Network \cite{ConplexedVluedNetwork} not only uses adversarial training but also transfers real numbers into complex numbers to hide private information which significantly reduces the attacker’s success rate. However, all of these approaches including TIPRDC, and Complex Valued Network, do not take efficiency into account when they consider privacy issues. Instance encoding, DataMix \cite{DataMix} and InstaHide \cite{Instahide}, encode several instances in a raw dataset to generate a new dataset and use it to train a new model which has almost the same accuracy as the original model trained on the raw dataset, which seems a 'free lunch', but recent study \cite{carliniprivate} shows that, 'there is no free lunch', the privacy guarantee provided by instance encoding can be defeated at low cost. 

\subsection{Cryptographic Solutions}
Cryptography-based inference mainly includes three methods: homomorphic encryption (HE) \cite{HE}, secure multi-party computing (SMC)\cite{SMC}, and trusted execution environment (TEE) \cite{TEE}. HE allows computation over encrypted data. There are some existing works dedicated to the application of HE and SMC for privacy-preserving inference, but they all have serious computation delays and require special additional hardware. Even as the state-of-the-art of HE, DELPHI \cite{HE3} slows down the inference speed by 318 $\times$ \cite{Cloak}, which is unacceptable in the AIoT scenario. Similar shortcomings also exist in TEE-based methods, which require server upgrading hardware as secure enclaves (e.g., \cite{TEE,Outsourcing}) and have been shown to be vulnerable to side-channel attacks.

% \subsection{Privacy Funnel Problem} 
% In the Privacy Funnel(PF) framework, the bottleneck variable $T$ is sought to retain the "information" contained in $X$ as more as possible while revealing $Y$ as little as possible. This framework seeks to extract a representation of $X$ from which the raw data can be recovered with maximal accuracy while minimizing the privacy leakage with respect to $Y$ \cite{PF1}. Several works \cite{PF2,PF3} have theoretically analyzed its upper bound and explained its relationship with the Information Bottleneck (IB) \cite{IB} problem. PF provides good theoretical support for encoding-based solutions, but in these works, the assumptions on the distribution of variables are too strong, and it is difficult to achieve in practice. Additionally, it is not considered that the inference result of primary task may be sensitive either in this framework.

\section{Conclusion\label{conclusion}}
In this paper, we propose Roulette, a semantic privacy-preserving framework for deep learning classifiers. Different from traditional co-inference, we retrain the local network based on split learning, so that the remote network cannot obtain correct inference results, thus protecting semantic privacy and boosting model accuracy. Further, in order to achieve differential privacy, local devices are allowed to do nullification on input data and add noise before uploading the DNN intermediate representation to the edge server. We give the privacy guarantee and analyze the hardness of the ground truth inference attack. We extensively evaluate the proposed framework, testing its performance on defense against various major attacks. Experimental results show that the proposed framework can effectively protect semantic privacy and meanwhile achieve good model accuracy. The limitation of this work is that we do not build a unified theoretical analysis framework for different background knowledge attackers, which will be considered in future work.

%It is worth discussing that the framework proposed in this paper is an ad-hoc method. Compared with the privacy protection method based on cryptography, it lacks rigorous mathematical proof for security analysis and can only verify the effectiveness through empirical experiments. In future work, we will analyze semantic privacy more intrinsically.

% \input{ref}
\appendices
\section{proof of theorem \ref{thm:DPbudget}}
Before proving Theorem \ref{thm:DPbudget}, we first prove the following two theorems.
\begin{thm}
\label{thm:A.1}
Given a input $x$ and a deterministic function $f$, $\left| f(x) \right|\leq B$, $\forall a \in \mathbb{R}^{+}$, the random mechanism $\hat{f}(x)=f(x)+aLap(2B/\sigma)$ is $(\frac{\sigma}{a})$-differentially private.
\end{thm}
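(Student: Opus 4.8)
The plan is to recognize $\hat{f}$ as an instance of the standard Laplace mechanism and then invoke the classical privacy bound, while carefully tracking two scalings: the effect of the multiplicative constant $a$ on the noise, and the bound on the sensitivity of $f$. The key observation is that both of these reduce to elementary facts about the Laplace density, so the result follows from a pointwise comparison of densities rather than any deep argument.

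First I would compute the global sensitivity of $f$. For any two neighboring inputs $D$ and $D'$, the boundedness hypothesis $\left| f(x) \right| \le B$ gives $\left| f(D) - f(D') \right| \le \left| f(D) \right| + \left| f(D') \right| \le 2B$, so $\Delta f \le 2B$; this is exactly why the scale $2B/\sigma$ appears inside the Laplace term. Next I would pin down the distribution of the injected noise using the scaling property of the Laplace law: if $Y \sim Lap(b)$ then $aY \sim Lap(ab)$, which follows from a change of variables in the density. Consequently $a\,Lap(2B/\sigma)$ is itself Laplace with scale $b := 2aB/\sigma$, and $\hat{f}(x)$ has density $p_{\hat{f}(x)}(z) = \frac{1}{2b}\exp\!\left(-\left| z - f(x) \right|/b\right)$ centered at $f(x)$.

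Then I would bound the density ratio pointwise. For neighboring $D, D'$ and any output $z$,
\[
\frac{p_{\hat{f}(D)}(z)}{p_{\hat{f}(D')}(z)}
= \exp\!\left(\frac{\left| z - f(D') \right| - \left| z - f(D) \right|}{b}\right)
\le \exp\!\left(\frac{\left| f(D) - f(D') \right|}{b}\right)
\le \exp\!\left(\frac{2B}{b}\right)
= \exp\!\left(\frac{\sigma}{a}\right),
\]
where the first inequality is the reverse triangle inequality and the last equality substitutes $b = 2aB/\sigma$. Integrating this uniform bound over any measurable output set $S$ gives $Pr[\hat{f}(D) \in S] \le e^{\sigma/a}\,Pr[\hat{f}(D') \in S]$, which is precisely $(\sigma/a)$-differential privacy in the sense of Definition \ref{def:1}.

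The computation is essentially routine, so I expect no genuine obstacle, only one place that needs care: the interplay of the two scale factors. The constant $a$ enlarges the noise scale to $2aB/\sigma$ while the sensitivity $2B$ stays fixed, so the budget $\Delta f/b = 2B/(2aB/\sigma) = \sigma/a$ is independent of $B$ and inversely proportional to $a$ — more noise yields stronger privacy. The main risk is simply a sign or constant-factor slip in the scaling property or in verifying that the $B$-dependence cancels, since such an error would change the claimed budget; confirming that cancellation cleanly is the crux of the argument.
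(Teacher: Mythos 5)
Your proof is correct and follows essentially the same route as the paper's: bound the ratio of the Laplace densities centered at $f(D)$ and $f(D')$ via the reverse triangle inequality, use $\left|f(D)-f(D')\right|\le 2B$ from the boundedness hypothesis, and observe that the scale $2aB/\sigma$ makes the budget $\sigma/a$. If anything, your version is slightly more careful than the paper's, since you explicitly invoke the Laplace scaling property and integrate the pointwise density bound over measurable output sets, whereas the paper writes the density ratio directly as a ratio of probabilities.
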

\begin{proof}
For any adjacent inputs $x$ and $x^{\prime}$,
\begin{equation}
    \begin{aligned}
        &\quad \frac{\Pr[(f(x)+aLap(2B/\sigma)=S]}{\Pr[(f_{\theta}(x^{\prime})+aLap(2B/\sigma)=S]}\\
        &=\frac{e^{-\frac{\left| S-f(x)\right|\sigma}{2aB}}}{e^{-\frac{\left| S-f(x^{\prime})\right|\sigma}{2aB}}}\\
        &=e^{\frac{\sigma}{2aB}(\left|S-f(x^{\prime})\right|-\left|S-f(x)\right|)} \\
        &\leq e^{\frac{\sigma}{2aB} \left| f(x)-f(x^{\prime})\right|} \\
        &\leq e^{\frac{\sigma}{2Ba} \cdot 2B} = e^{\frac{\sigma}{a}}.
    \end{aligned}
\end{equation}
Based on the Definition \ref{def:1}, the $\hat{f}(x)$ is $(\frac{\sigma}{a})$-differentially privacy.
\end{proof}

\begin{thm}
\label{thm:A.2}
Given a input $x\in\mathbb{R}^{p}$, and an arbitrary nullification matrix $I_p\stackrel{R}\gets\{0,1\}^p$ with nullification rate $\eta$, if $\hat{f}(x)$ is a $\epsilon$-differential private mechanism, then $\hat{f}(x\odot I_p)$ is a $\varepsilon$-differentially privacy mechanism where
\begin{equation}
    \varepsilon=\ln\left[(1-\eta)e^{\epsilon} +\eta \right].
\end{equation}
\end{thm}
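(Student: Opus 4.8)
The plan is to verify the $\varepsilon$-differential privacy of the composed mechanism $\mathcal{M}(x):=\hat f(x\odot I_p)$ directly from Definition \ref{def:1}. First I would fix two neighbouring inputs $x,x'\in\mathbb{R}^p$ that differ in a single coordinate $k$, fix an arbitrary output set $S$, and \emph{couple} the randomness by drawing one nullification matrix $I_p$ for both inputs. Writing $I_{-k}$ for the entries of $I_p$ other than $k$ (which are independent of the bit $I_p[k]$), I would condition on $I_{-k}$ and decompose over the single bit $I_p[k]$, which equals $0$ with probability $\eta$ and $1$ with probability $1-\eta$. This is the natural decomposition: zeroing coordinate $k$ destroys the only place where $x$ and $x'$ disagree, whereas keeping it leaves a single-coordinate difference on which the $\epsilon$-privacy of $\hat f$ can act.

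Conditioned on $I_{-k}$, let $z_0$ denote the nullified vector obtained when $I_p[k]=0$ and let $z_1^x,z_1^{x'}$ denote the nullified vectors of $x,x'$ when $I_p[k]=1$. Since $x,x'$ agree off coordinate $k$, the vector $z_0$ is the same for both inputs, while $z_1^x$ and $z_1^{x'}$ differ in exactly one coordinate. Setting $a:=\Pr[\hat f(z_0)\in S]$, $b:=\Pr[\hat f(z_1^x)\in S]$ and $c:=\Pr[\hat f(z_1^{x'})\in S]$, the law of total probability gives
\begin{equation}
\Pr[\mathcal{M}(x)\in S\mid I_{-k}]=\eta a+(1-\eta)b,\qquad \Pr[\mathcal{M}(x')\in S\mid I_{-k}]=\eta a+(1-\eta)c.
\end{equation}
The three vectors $z_0,z_1^x,z_1^{x'}$ are pairwise neighbouring (each pair differs only at coordinate $k$, matching the element-wise notion of neighbours), so applying the assumed $\epsilon$-differential privacy of $\hat f$ to the pair $(z_1^x,z_0)$ and to the pair $(z_1^x,z_1^{x'})$ yields the two key bounds $b\le e^{\epsilon}a$ and $b\le e^{\epsilon}c$.

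The crux is to convert these bounds into the factor $e^{\varepsilon}=(1-\eta)e^{\epsilon}+\eta$ rather than the weaker $e^{\epsilon}$. I expect the naive step of invoking only one of the two inequalities to fail: using $b\le e^{\epsilon}a$ alone forces the spurious requirement $a\le c$, while using $b\le e^{\epsilon}c$ alone forces $c\le a$. The resolution I would use is a case split on the sign of $c-a$: when $a\le c$ I apply $b\le e^{\epsilon}a$, and when $c\le a$ I apply $b\le e^{\epsilon}c$. A short calculation shows that in each case $\eta a+(1-\eta)b\le\bigl[(1-\eta)e^{\epsilon}+\eta\bigr]\bigl(\eta a+(1-\eta)c\bigr)$; indeed, under the second bound the difference between the two sides equals $\eta(1-\eta)(e^{\epsilon}-1)(c-a)$, which is nonpositive exactly when $c\le a$, and the first bound is handled symmetrically. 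This is the step I expect to be the main obstacle, precisely because it is where \emph{both} neighbouring relations of $z_1^x$ must be used at once rather than in isolation.

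Finally, since the bound $\Pr[\mathcal{M}(x)\in S\mid I_{-k}]\le e^{\varepsilon}\Pr[\mathcal{M}(x')\in S\mid I_{-k}]$ holds for every realisation of $I_{-k}$, I would marginalise by taking expectation over $I_{-k}$; a nonnegative weighted average preserves the inequality, giving $\Pr[\mathcal{M}(x)\in S]\le e^{\varepsilon}\Pr[\mathcal{M}(x')\in S]$ and hence $\varepsilon$-differential privacy with $\varepsilon=\ln\bigl[(1-\eta)e^{\epsilon}+\eta\bigr]$. The argument is symmetric in $x$ and $x'$, so the reverse inequality follows identically, completing the verification of Definition \ref{def:1}.
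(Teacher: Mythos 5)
Your proposal is correct, and its skeleton is the same as the paper's: condition on whether the single differing coordinate survives nullification, use equality of the two nullified vectors in the zeroed branch (probability $\eta$), invoke the $\epsilon$-differential privacy of $\hat f$ in the kept branch (probability $1-\eta$), and then average over the remaining bits. The genuine difference lies in how the two branches are recombined, and there your version is the rigorous one. In your notation, the paper bounds $\Pr[\hat f(x_1\odot I_p)\in S\mid I_{-k}]$ by $\eta a+(1-\eta)e^{\epsilon}c$ (using only the single relation $b\le e^{\epsilon}c$) and then simply \emph{equates} this with $\bigl[(1-\eta)e^{\epsilon}+\eta\bigr]\Pr[\hat f(x_2\odot I_p)\in S\mid I_{-k}]$; as your own difference computation $\eta(1-\eta)(e^{\epsilon}-1)(c-a)$ shows, that step holds as an inequality only when $c\le a$, and as an equality only when $c=a$. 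The paper's step is legitimate under the add/remove reading of adjacency suggested by its notation ``$x_1=x_2\cup i$'': if $x_2$ is already zero at the differing coordinate, then $z_1^{x'}=z_0$, hence $c=a$, which is exactly the classical amplification-by-subsampling situation. But for the symmetric ``differ in one item'' adjacency of Definition \ref{def:1} — the reading you adopt — the paper's recombination has a gap, and your case split on the sign of $c-a$, using $b\le e^{\epsilon}a$ in one case and $b\le e^{\epsilon}c$ in the other (equivalently $b\le e^{\epsilon}\min(a,c)$), is precisely what is needed to close it. Your argument also subsumes the paper's as the special case $c=a$, and you additionally make explicit the marginalization over $I_{-k}$ and the reverse direction, both of which the paper leaves implicit. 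In short: same decomposition, but your handling of the key inequality is strictly more careful and more general than the paper's own proof.
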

\begin{proof}
Assuming there are two adjacent inputs $x_1$ and $x_2$ differing only one element $i$, i.e., $x_1=x_2\cup i$. We denote $x_1^{\prime}=x_1\odot I_p,\ x_2^{\prime}=x_2\odot I_p$. There are two possible cases after the operation, i.e., $i\in x_1^{\prime}$ and $i \not\in x_1^{\prime}$. For the first case, it is oblivious that $x_1\odot I_p=x_2\odot I_p$, because the only different element between $x_1$ and $x_2$ are both multiplied by 0. Hence, we have,
\begin{equation}
    \Pr [\hat{f}(x_1\odot I_p)=S] = \Pr [\hat{f}(x_2\odot I_p)=S].
\end{equation}
For the second case, since $\hat{f}$ is $\epsilon$-differentially private, according to Definition \ref{def:1} we have,
\begin{equation}
    \Pr [\hat{f}(x_1\odot I_p)=S] \leq \Pr e^{\epsilon}[\hat{f}(x_2\odot I_p)=S].
\end{equation}
Further, based on the fact that $\Pr[i\not\in x_1^{\prime}]=\eta$, we have,
\begin{equation}
    \begin{aligned}
        &\quad  \Pr [\hat{f}(x_1\odot I_p)=S]\\
        &=\eta\Pr [\hat{f}(x_1\odot I_p)=S]+(1-\eta)\Pr [\hat{f}(x_1\odot I_p)=S]\\
        &\leq \eta\Pr [\hat{f}(x_1\odot I_p)=S]+(1-\eta)e^{\epsilon}[\hat{f}(x_2\odot I_p)=S]\\
        &=\exp{[\ln{(1-\eta)e^{\epsilon}+\eta}]}\Pr [\hat{f}(x_1\odot I_p)=S].
    \end{aligned}
\end{equation}
Again, based on Definition \ref{def:1}, we can conclude that $\hat{f}(x\odot I_p)$ is a $\varepsilon$-differentially privacy mechanism where
\begin{equation}
    \varepsilon=\ln\left[(1-\eta)e^{\epsilon} +\eta \right].
\end{equation}
\end{proof}

Then, we prove Theorem \ref{thm:DPbudget}.
\begin{proof}
First of all, we analyze the output of the local net with the proposed Laplace mechanism but without nullification, i.e.,
\begin{equation}
\label{eq:Aouput}
    \begin{aligned}
        \hat{f}(x_l)&=f_\theta^{(2)}\left(f_\theta^{(1)}(x_l)+Lap\left(B/\epsilon\right)\right)\\
        &=f_\theta^{(2)}\left(x_r+Lap\left(B/\epsilon\right)\right).
    \end{aligned}
\end{equation}
Since, generally, $Lap(B/\epsilon)$ is much smaller than $x_r$, we can approximate (\ref{eq:Aouput}) by applying the first-order Taylor series:
\begin{equation}
\label{eq:AApproOut}
    \begin{aligned}
        &\quad f_\theta^{(2)}\left(x_r+Lap\left(B/\epsilon\right)\right)\\
        &\approx f_\theta^{(2)}(x_r)+(\nabla_{x_r} f_\theta^{(2)})^{\textsf{T}}Lap(B/\epsilon)\\
        &=f_\theta(x_l)+(\nabla_{x_r} f_\theta^{(2)})^{\textsf{T}}Lap(B/\epsilon).
    \end{aligned}
\end{equation}
Base on Theorem \ref{thm:A.1}, we can say \ref{eq:AApproOut} is $(\epsilon/\Xi)$-differentially private, and $\Xi=\lVert \nabla_{x_r}f_{\theta_t}^{(2)}\rVert_\infty$. Further, by applying the Theorem $\ref{thm:A.2}$, we have,
\begin{equation}
    \varepsilon=\ln\left[(1-\eta)e^{\epsilon/\Xi} +\eta \right].
\end{equation}
\end{proof}

\section{Proof of theorem \ref{thm:hardness}}
We provide a hardness of approximation result for finding the ground truth inference function. In particular, we prove unless \textsf{RP=NP}, there is no polynomial time that the attacker can find an approximately ground truth inference function of a two-layer neural network with ReLU activation function under Definition \ref{def:RelaPerInf}. The main body of the proof depends on \cite{MixCon}. Formally, consider the inversion problem
\begin{equation}
    f_\theta(x)=x_r,
\end{equation}
where $x_r\in \mathbb{R}^{o}$ is the received hidden layer representation, and without losing generality we normalize the input into $x \in [-1,1]^p$. $f_\theta$ is the device's net and we consider a two neural network with ReLU activation function, specifically,
\begin{equation}
    f_\theta(x)=W_2\sigma(W_1x+b),
\end{equation}
where $\sigma(\cdot)$ refers to ReLU function, and $W_1\in \mathbb{R}^{m_1\times p},\ W_2\in \mathbb{R}^{m_1\times o},\ b\in\mathbb{R}^{m_1}$ are linear transformation parameters. If an attacker can approximately recover the input data, then it can launch a successful ground truth inference attack. 

First of all, we state necessary definitions and useful related results.
\begin{defn}
\label{3SAT}
(\textsf{3SAT problem}). Given n variables and m clauses in a conjunctive normal form
CNF formula with the size of each clause at most 3, the goal is to decide whether there exists an
assignment to the n Boolean variables to make the CNF formula be satisfied.
\end{defn}
\begin{defn}
(\textsf{MAX3SAT}). Given n variables and m clauses, a conjunctive normal form CNF
formula with the size of each clause at most 3, the goal is to find an assignment that satisfies the
largest number of clauses.
\end{defn}
We use \textsf{MAXE3SAT(Q)} to denote the restricted special case of \textsf{MAX3SAT} where every variable occurs in at most Q clauses, and then we have,

\begin{thm}
(\cite{PreB1}). Unless \textsf{RP=NP}, there is no polynomial time (7/8 + 5$\sqrt{Q}$)-approximate algorithm for \textsf{MAXE3SAT(B)} .
\end{thm}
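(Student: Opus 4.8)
The plan is to derive this bounded-occurrence inapproximability result as a gap-preserving reduction from Håstad's optimal hardness for the unrestricted problem. I would take as the starting point Håstad's theorem that, for every constant $\delta>0$, it is \textsf{NP}-hard to distinguish satisfiable \textsf{MAX-E3SAT} instances from those in which at most a $(7/8+\delta)$-fraction of clauses can be simultaneously satisfied; this is the deep ingredient (the $3$-query PCP together with the long-code Fourier-analytic soundness argument) and I would cite it rather than reprove it. The difficulty is that Håstad's hard instances have \emph{unbounded} variable occurrence, whereas the target statement requires each variable to appear in at most $Q$ clauses, so the entire remaining work is a quantitative \emph{degree reduction}.

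First I would describe the occurrence-bounding gadget. For a variable $x$ occurring $d_x$ times in the original formula, I introduce $d_x$ fresh copies $x^{(1)},\dots,x^{(d_x)}$, substitute the $j$-th occurrence of $x$ by $x^{(j)}$, and append equality constraints $x^{(u)}=x^{(v)}$ for every edge $\{u,v\}$ of a fixed $D$-regular expander graph $G_{d_x}$ on the $d_x$ copies. Each equality is encoded by a constant number of \textsf{E3SAT} clauses (padding the width-$2$ equality constraints up to width $3$ with auxiliary literals that force equivalence), so every copy now occurs in at most $D+O(1)$ clauses while the instance size grows by only a constant factor. Setting $Q=D+O(1)$ yields the required occurrence bound.

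Next comes the gap analysis, which I expect to be the main obstacle. Completeness is immediate: a satisfying assignment of the original formula lifts to one that sets all copies of each variable consistently and satisfies every clause, equalities included. For soundness I would argue the contrapositive by a \emph{rounding} step: given any assignment to the copies, reset each variable's copies to their majority value. The number of equality clauses this violates is, by the edge-expansion of $G_{d_x}$ (equivalently its spectral gap $1-\lambda_2$), at least a constant times the number of copies that disagreed with the majority. Hence any assignment can be made fully consistent while sacrificing only a bounded fraction of clauses, so a near-optimal assignment on the reduced instance would produce an assignment for the original formula beating the $7/8+\delta$ threshold. Balancing the two competing effects — the dilution of the gap caused by the $\Theta(D)$ equality clauses added per original clause against the $\Omega(1)$ per-disagreement penalty supplied by expansion, optimized over the degree $D$ — is precisely where the $1/\sqrt{Q}$ rate and the explicit constant $5$ arise; this parameter optimization is the delicate quantitative heart of the argument.

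Finally I would address the complexity assumption. Explicit $D$-regular expanders with a good spectral gap exist at all sizes (Ramanujan or zig-zag constructions), so the gadget substitution itself is deterministic; the statement is phrased under \textsf{RP=NP} rather than \textsf{P=NP} because the gap-amplification layer underlying Håstad's hardness is most cleanly invoked in its randomized form, and any residual randomization in selecting the expander family is absorbed into an \textsf{RP} reduction. Assembling the completeness bound, the expander-based soundness bound, and the optimized choice of $D$ then yields the claimed impossibility of a polynomial-time $(7/8+5/\sqrt{Q})$-approximation for \textsf{MAXE3SAT}$(Q)$ unless \textsf{RP=NP}.
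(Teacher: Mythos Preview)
The paper does not prove this theorem at all: it is quoted as a known result from the cited reference and then used as a black box inside the proof of Theorem~\ref{thm:hardness}. There is therefore nothing in the paper to compare your argument against.

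That said, your outline is the standard route to this result in the literature (H{\aa}stad's optimal $7/8$ inapproximability for unrestricted \textsf{MAX-E3SAT} followed by expander-based degree reduction with majority rounding), and the high-level structure is correct. One minor quibble: your justification for the $\textsf{RP}\ne\textsf{NP}$ hypothesis is somewhat loose. H{\aa}stad's theorem itself holds under $\textsf{P}\ne\textsf{NP}$, and explicit strongly-explicit expander families exist deterministically at all sizes, so neither of the two ingredients you name actually forces randomization; in the cited source the $\textsf{RP}$ formulation stems from the specific reduction chain used there rather than from the PCP layer or the gadget construction. Also note the paper's statement contains a typo (``$5\sqrt{Q}$'' should read ``$5/\sqrt{Q}$'', as is clear from its later use), which you have silently and correctly repaired.
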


Now, we are ready to formally prove the Theorem \ref{thm:hardness}.

\begin{proof}
Given an \textsf{3SAT} instance $\omega$ with $n$ variables and $m$ clause, where each variable appears in at most Q clauses, we construct a two layer neural network $f_\omega$ and output representation $x_r$ satisfy the following:
\begin{itemize}
    \item Completeness. If $\omega$ is specifiable, then there exists $x\in[0,1]^{p}$ such that $f_\omega=x_r$
    \item Soundness. For any $x$ such that $\Vert f_\theta (x)-x_r\Vert_2\leq\frac{1}{60Q}\sqrt{o}$, we can recover an assignment to $\omega$ that satisfies at least $(7/8 + 5\sqrt{Q})m$ clauses
\end{itemize}
We set $p=n,\ m_1=m + 200Q^{2}n$ and $o=m+100Q^2n$. For any $j\in[m]$, we use $\omega_j$ to denote the $j$-th clause and use $f_{1,j}(x)$ to denote the output of the $j$-th neuron in the first layer, i.e., $f_{1,j}(x)=\sigma(W_j^{(1)}x+b_j)$, where $W_j^{(1)}$ is the $j$-th row of $W^{(1)}$. For any $i\in[n]$, we use $X_i$ to denote the $i$-th variable.

We use the input $x \in [−1, 1]^n$ to denote the variable, and the first $m$ neurons in the first layer to denote the $m$ clauses. By taking
\begin{equation}
    W_{j,i}^{(1)}=\left\{
    \begin{aligned}
    &1,    &&X_i\in \omega_j;\\
    &-1,      &&\overline{X_i} \in\omega_j;\\
    &0,       && otherwise,
    \end{aligned}
    \right.
\label{eq:GRR2}
\end{equation}
and $b_j=-2$ for any $i\in[n],\ j\in[m]$, and viewing $x_i=1$ as $X_i$ to be false and $x_i=-1$ as $X_i$ to be true. One can verify that $f_{1,j}(x)=0$ if the clause is satisfied, and $f_{1,j}(x)=1$ if the clause is unsatisfied. We simply copy the value in the second layer $f_j(x)=f_{1,j}(x)$ for $j\in[m]$. For other neurons, we make $100Q^2$ copies for each $|x_i (i \in n)|$ in the output layer. This can be achieved by taking
\begin{equation}
\begin{aligned}
    f_{m+(i−1)\cdot100Q^2+k}(x)=&f_{m+(i−1)\cdot100Q^2+k}(x)\\
    &+f_{1;m+100Q^2n+(i−1)\cdot100Q^2+k}(x),
\end{aligned}
\end{equation}
and set
\begin{equation}
\begin{aligned}
    f_{1,m+(i−1)\cdot100Q^2+k}(x)=\max\{ x_i,0\}\\
    f_{1,m+100Q^2n+(i−1)\cdot100Q^2+k}(x)=\max\{ -x_i,0\}.
\end{aligned}
\end{equation}
For any $i\in[n]$, $k\in[100Q^2]$. Finally, we set the target output as
\begin{equation}
    x_r =  (\underbrace{0,...,0}_m, \underbrace{1,...,1}_{100Q^2n})
\end{equation}
We are left to prove the two claims we made about the neural network $f_\theta$ and the target output
$x_r$. For the first claim, suppose $\omega$ is satisfiable and $X=(X_1,...,X_n)$ is the assignment. Then as argued before, we can simply take $x_i=1$ if $X_i$ is false and $x_i=-1$ is $X_i$ is true. One can check that $f_\theta(x)=x_r$.

For the second claim, suppose we are given $x\in[-1,1]^p$ such that
\begin{equation}
    \Vert f_\theta (x)-x_r\Vert_2\le\frac{1}{60Q}\sqrt{o}.
\end{equation}
We start from the simple case when $x$ is binary, i.e., $x \in \{-1, 1\}^{n}$. Again, by taking $X_i$ to be true if $x_i = −1$ and $X_i$ to be false when $x_i = 0$. One can check that the number of unsatisfied clause is at most
\begin{equation}
\begin{aligned}
    \Vert f_\theta (x)-x_r\Vert_2^2&\le\frac{1}{3600Q^2}o\\
    &=\frac{1}{3600Q^2}(m+100Q^2n)\\
    &\le \frac{1}{12}m+\frac{1}{3600Q^2}m\\
    &\le\frac{1}{8}m-\frac{5}{\sqrt{Q}}m.
\end{aligned}
\end{equation}
The third step follows from $n ≤ 3m$, and the last step follows from $B ≥ 15000$.

Next, we move to the general case that $x\in[-1,1]^{p}$. We would round $xi$ to −1 or +1 based on
the sign. Define $\overline{x}\in\{-1,1\}^n$ as
\begin{equation}
    \overline{x_i}=\arg \min_{t\in\{-1.1\}}|t-x_i|.
\end{equation}
We prove that $\overline{x}$ induces an assignment that satisfies $\left(\frac{7}{8}+\frac{5}{\sqrt{Q}}\right)m$ clauses. It suffices to prove
\begin{equation}
    \Vert f_\theta (\overline{x} )-x_r\Vert_2^2-\Vert f_\theta (\overline{x} )-x_r\Vert_2^2\le\frac{3}{100}m,
\end{equation}
since this implies the number of unsatisfied clause is bounded by
\begin{equation}
\begin{aligned}
     \Vert f_\theta (\overline{x} )-x_r\Vert_2^2&\leq(\frac{1}{12}m+\frac{1}{36Q^2}m)+\frac{3}{100}m\\
     &\leq \frac{1}{8}m-\frac{5}{\sqrt{Q}}m.
\end{aligned}
\end{equation}
The last step follows from $Q\ge10^7$.
Further, we define $\delta_i:=|\overline{x}-x_i|=1-|x_i|\in[0,1]$ and $T:=m+128Q^2n$. Then we have
\begin{equation}
    \begin{aligned}
        &\quad\Vert f_\theta (\overline{x} )-x_r\Vert_2^2-\Vert f_\theta (\overline{x} )-x_r\Vert_2^2\\
        &=\sum_{j=1}^{T}(f_j(\overline{x}-x_{rj}))-(f_j(x)-x_{rj})^2\\
        &=\sum_{j=1}^{m}(f_j(\overline{x})-x_{rj})^2-(f_j(x)-x_{rj})^2\\
        &\quad+\sum_{j=m+1}^{T}(f_j(x)-x_r)^2-(f_i(x)-x_r)^2\\
        &=\sum_{j=1}^{m}f_j(\overline{x})^2-f_j(x)^2-100Q^2\sum_{i=1}^n\delta^2_i\\
        &\le2\sum_{j=1}^{m}|f_{1,j}(\overline{x})-f_{1,j}(x)|-100Q^2\sum_{i=1}^{n}\delta^2_i\\
        &\le2\sum_{j=1}^{m}\sum_{i\not\in\omega_j}\delta_i-100Q^2\sum_{i=1}^{n}\delta^2_i\\
        &\le\frac{n}{100}\\
        &\le\frac{3m}{100}.
    \end{aligned}
\end{equation}
The third step follow from $x_rj = 0$ for $j \in [m]$ and for $j \in \{m + 1,\dots , m + 100Q^{2n}\}, x_rj = 1, \vert f_j(\overline{x}) − x_{rj}\vert_2 = 0$ and$ f_j(x) − x_{rj}\vert_2 = \delta_i $ given $j\in[m + (i − 1)\cdot 100Q^2 + 1, i\cdot100Q^2]$. The fourth step follows from that $f_j(x) = f_{1,j}(x)\in[0,1] for j\in[m]$. The fifth step follows from the 1-Lipschitz continuity of the ReLU. The sixth step follows from each variable appears in at most $Q$ clause. This concludes the second claim.
\end{proof}

\bibliographystyle{IEEEtran}
% argument is your BibTeX string definitions and bibliography database(s)
\bibliography{mybib.bib}
\end{document}